\DeclarePairedDelimiter{\abs}{\lvert}{\rvert}
\algnewcommand\algorithmicparfor{\textbf{for}}
\algnewcommand\algorithmicpardo{\textbf{pardo}}
\algnewcommand\algorithmicendparfor{\textbf{end for}}
\algnewcommand{\LineComment}[1]{\State \(\textcolor{blue}{\triangleright}\) #1}
\def\Let@{\def\\{\notag\math@cr}}
\tikzset{axis break gap/.initial=1mm}
\newcommand\resetstackedplots{
\makeatletter
\pgfplots@stacked@isfirstplottrue
\makeatother
\addplot [forget plot,draw=none] coordinates{(1,0) (5,0) (10,0) (20,0) (40,0)};
}
\newtheorem{thm}{Theorem}
\newdefinition{rmk}{Remark}
\newproof{pf}{Proof}
\newproof{pot}{Proof of Theorem \ref{thm2}}
\newcommand{\trans}{\mathsf{T}}
\newcommand{\size}[1]{\left\lvert #1 \right\rvert}
\newcommand{\paren}[1]{\left( #1 \right)}
\newcommand{\func}[2][]{
    \ifthenelse{\equal{#2}{}}
    {\text{\fontfamily{lmtt}\selectfont #1}}
    {\text{\fontfamily{lmtt}\selectfont #1}\left(#2\right)}}
\newcommand{\sample}[1][]{
    \ifthenelse{\equal{#1}{}}
    {\text{\fontfamily{lmtt}\selectfont SAMPLE}}
    {\text{\fontfamily{lmtt}\selectfont SAMPLE}\left(#1\right)}}
\newcommand{\G}[1][]{
    \ifthenelse{\equal{#1}{}}
    {\mathcal{G}}
    {\mathcal{G}_{#1}}}
\newcommand{\V}[1][]{
    \ifthenelse{\equal{#1}{}}
    {\mathcal{V}}
    {\mathcal{V}_{#1}}}
\newcommand{\E}[1][]{
    \ifthenelse{\equal{#1}{}}
    {\mathcal{E}}
    {\mathcal{E}_{#1}}}
\newcommand{\Gt}[1][]{
    \ifthenelse{\equal{#1}{}}
    {\mathcal{G}}
    {\mathcal{G}}_{\text{#1}}}
\newcommand{\Vt}[1][]{
    \ifthenelse{\equal{#1}{}}
    {\mathcal{V}}
    {\mathcal{V}_{\text{#1}}}}
\newcommand{\Et}[1][]{
    \ifthenelse{\equal{#1}{}}
    {\mathcal{E}}
    {\mathcal{E}_{\text{#1}}}}
\newcommand{\f}[1][]{
    \ifthenelse{\equal{#1}{}}
    {f}
    {f^{\paren{#1}}}}
\newcommand{\X}[1][]{
    \ifthenelse{\equal{#1}{}}
    {\bm{X}}
    {\bm{X}^{\paren{#1}}}}
\newcommand{\Ws}[1][]{
    \ifthenelse{\equal{#1}{}}
    {\bm{W}_\circ}
    {\bm{W}_\circ^{\paren{#1}}}}
\newcommand{\Wn}[1][]{
    \ifthenelse{\equal{#1}{}}
    {\bm{W}_\star}
    {\bm{W}_\star^{\paren{#1}}}}
\newcommand{\W}[1][]{
    \ifthenelse{\equal{#1}{}}
    {\bm{W}}
    {\bm{W}_{\text{#1}}}}
\newcommand{\An}[1][]{
    \ifthenelse{\equal{#1}{}}
    {\Tilde{\bm{A}}}
    {\Tilde{\bm{A}}_{\text{#1}}}}
\begin{document}

\title{Accurate, Efficient and Scalable Training of Graph Neural Networks}
\author[1]{Hanqing Zeng\corref{cor1}}
\ead{zengh@usc.edu}
\author[1]{Hongkuan Zhou\corref{cor1}}
\ead{hongkuaz@usc.edu}
\author[1]{Ajitesh Srivastava}
\ead{ajiteshs@usc.edu}
\author[2]{Rajgopal Kannan}
\ead{rajgopal.kannan.civ@mail.mil}
\author[1]{Viktor Prasanna}
\ead{prasanna@usc.edu}

\cortext[cor1]{Equal contribution.}
\address[1]{University of Southern California, Los Angeles, CA}
\address[2]{US Army Research Lab, Los Angeles, CA}

\newpageafter{author}

\begin{abstract}
{Graph Neural Networks (GNNs) are powerful deep learning models to generate node embeddings on graphs. 
When applying deep GNNs on large graphs, it is still challenging to perform training in an efficient and scalable way.
We propose a novel parallel training framework. Through sampling small subgraphs as minibatches, we reduce training workload by orders of magnitude compared with state-of-the-art minibatch methods. 
We then parallelize the key computation steps on tightly-coupled shared memory systems.
\color{black}}
For graph sampling, we exploit parallelism within and across sampler instances, and propose an efficient data structure supporting concurrent accesses from samplers. 
The parallel sampling algorithm theoretically achieves near-linear speedup with respect to number of processing units. 
For feature propagation within subgraphs, we improve cache utilization and reduce DRAM traffic by data partitioning. 
Our partitioning is a 2-approximation strategy for minimizing the communication cost compared to the optimal. 
We further develop a runtime scheduler to reorder the training operations and adjust the minibatch subgraphs for better parallel performance. 
{Finally, we generalize the above parallelization strategies to support multiple types of GNN models and graph samplers. 
The proposed graph embedding method outperforms the state-of-the-art in scalability, efficiency and accuracy simultaneously. }
On a 40-core Xeon platform, we achieve $60\times$ speedup (with AVX enabled) in the sampling step and $20\times$ speedup in the feature propagation step, compared to the serial implementation. Our algorithm enables fast training of deeper GNNs, as demonstrated by orders of magnitude speedup compared to state-of-the-art Tensorflow implementation.
\end{abstract}
\begin{keyword}
Graph representation learning; Graph Neural Networks; Graph sampling; Graph partitioning; Memory optimization;
\end{keyword}

\maketitle
\newpage
\section{Introduction}


Graph embedding is a powerful dimensionality reduction technique to facilitate downstream graph analytics. 
The embedding process converts graph nodes with unstructured neighbor connections into points in a low-dimensional vector space.
Embedding is essential for a wide range of tasks such as content recommendation \citep{pinsage}, traffic forecasting \citep{gcn_traffic}, image recognition \citep{gcn_cv} and protein function prediction \citep{graphsage}. Among the various embedding techniques, {Graph Neural Networks (GNNs)} (including Graph Convolutional Network (GCN) \citep{gcn} 
and its variants \citep{graphsage}, \citep{fastgcn}, \citep{gat}, \citep{graphsaint}) have attained much attention. {GNNs} produce accurate and robust embedding without the need of manual feature selection.


On large graphs, {GNN} training proceeds in the unit of minibatches. 
Due to edge connections, the graph nodes are not I.I.D distributed, and thus cannot be sampled uniformly at random as minibatch data points. 
State-of-the-art methods construct minibatches by sampling on each {GNN} layer (i.e., \emph{layer sampling}).
The {vanilla GCN~\citep{gcn}} and its successor GraphSAGE~\citep{graphsage} sample by tracking down the inter-layer connections. Their approaches preserve the training accuracy of the original model, but the parallel training is not work-efficient {due to a phenomenon often referred to as ``neighbor explosion'' \citep{graphsage, sgcn, fastgcn}. Namely, 
for every additional GNN layer traversed by their samplers, the number of sampled nodes (i.e., neighbors) grows by an order of magnitude. 
Consequently, the sampled nodes across different minibatches overlap significantly, especially at the first few GNN layers. The amount of redundant computation thus increases exponentially with the number of GNN layers. }
To alleviate such high redundancy, FastGCN~\citep{fastgcn} proposes to independently sample the nodes of each GNN layer, without explicitly considering the layer connection constraint. Although FastGCN is faster than \citep{gcn, graphsage}, it incurs significant accuracy loss and requires preprocessing on the full grpah which is expensive and not easily parallelizable.

Due to the layer sampling design philosophy, it is difficult for state-of-the-art methods \citep{gcn,graphsage,fastgcn} to simultaneously achieve accuracy, efficiency and scalability. In this work, we perform sampling on the graph rather than the {GNN} layers. 
Our novelty lies in proposing a \emph{graph sampling}-based minibatch training algorithm via joint optimization on the learning quality and parallelization cost.
We achieve scalability by 1) Developing a novel data structure that enables efficient subgraph sampling through supporting fast parallel updates on the sampling probability; 2) Optimizing parallel execution of intra-subgraph feature propagation and layer-wise weight updates --- specifically a cache-efficient subgraph partitioning scheme that guarantees near-minimal DRAM traffic. 
{Optimization in the above two steps can be generalized to support multiple GNN models and sampling algorithms.}
We achieve work-efficiency by avoiding ``neighbor explosion'', as each layer of our minibatched {GNN} contains the same number of neurons corresponding to the subgraph nodes. 
Finally, we achieve learning accuracy since our sampled subgraphs preserve connectivity characteristics of the original training graph.
The main contributions of this paper are:

\begin{outline}
\1 We propose a parallel GNN training algorithm based on graph sampling:
    \2 \emph{Accuracy} is achieved since the sampler returns small, representative subgraphs of the original graph. 
    \2 \emph{Efficiency} is {optimized} since we always build complete GNNs on the minibatch subgraphs to avoid ``neighbor explosion" in deeper layers.
    \2 \emph{Scalability} is achieved with respect to number of processing cores, graph size and {GNN} depth by parallelizing various key steps.
\1 We propose a novel data structure that supports fast, incremental and parallel updates to a probability distribution. Our parallel sampler based on this data structure theoretically and empirically achieves near-linear scalability with respect to number of processing units. 
\1 We parallelize all the key operations to scale the overall minibatch training to a large number of processing cores. Specifically, for subgraph feature propagation, we perform intelligent partitioning along the feature dimension to achieve close-to-optimal DRAM and cache performance.
\1 We propose a runtime scheduling algorithm for training: 
    \2 By rearranging the order of various operations, we significantly reduce the training time under a wide range of model configurations. 
    \2 By partition scheduling and node clipping of subgraphs, we improve the feature propagation performance by better cacheline alignment.  
\1 {We show that our parallelization and scheduling techniques are applicable to a number of GNN architectures (including graph convolution and graph attention) and graph sampling algorithms (including random edge sampling and variants of random walk sampling). }
\1 We perform thorough evaluation on a 40-core Xeon server. Compared with serial implementation, we achieve $15\times$ overall training time speedup. Compared with state-of-the-art minibatch methods, our training achieves up to $7.8\times$ speedup without accuracy loss.
\1 Our parallel training greatly facilitates development of deeper GNN models on larger graphs. We achieve two orders of magnitude speedup for 3-layer GNNs compared to state-of-the-art Tensorflow implementation.
\end{outline}
\section{Background and Related Work}





\label{sec: gcn background}

{Graph Neural Networks (GNNs), including Graph Convolutional Network (GCN) \citep{gcn}, GraphSAGE \citep{graphsage} and Graph Attention Network (GAT) \citep{gat}, are the state-of-the-art deep learning models for graph embedding. They have been widely shown to learn highly accurate and robust representations of the graph nodes. }
{Like CNNs, GNNs} belong to a type of multi-layer neural network, which performs node embedding as follows. 
{The input to a GNN is a graph whose each node is associated with a feature vector (i.e., node attribute).} 
The {GNN} propagates the features of each node layer by layer, where each layer performs tensor operations based on the  model weights and the input graph topology. The last {GNN} layer outputs embedding vectors for each node of the input graph. 
Essentially, both the input node attributes and the topological information of the graph are ``embedded" into the output vectors. 



\begin{figure*}[ht]
\begin{center}
\includegraphics[width=0.8\textwidth]{./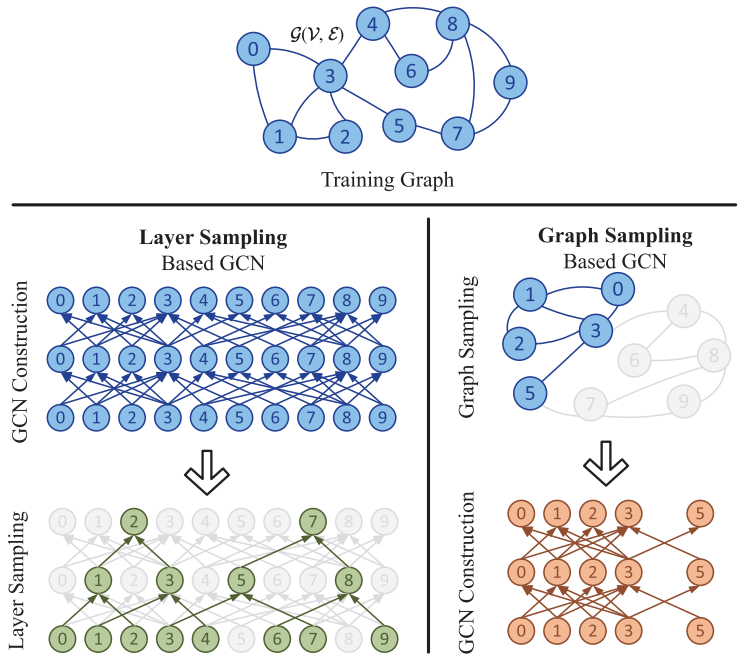}
\caption{Illustration on layer sampling and graph sampling based GCN design.}
\label{fig: illustration}
\end{center}
\end{figure*}

\subsection{Forward and Backward Propagation}

{In this paper, we mainly consider four types of widely used GNNs: Graph Convolutional Network (GCN) \citep{gcn}, GraphSAGE \citep{graphsage}, MixHop \citep{mixhop} and Graph Attention Network (GAT) \citep{gat}. We first introduce in detail the GraphSAGE model architecture, and then summarize the layer operations of the other three. }

Let the input graph be $\G\paren{\V,\E,\X}$, where $\X\in \mathbb{R}^{\size{\V}\times f}$ stores the initial node attributes, and $f$ is the initial feature length. 
{A GraphSAGE layer aggregates signals of nodes $\V$ along the edges $\E$. 
A full GraphSAGE network is build by stacking multiple layers, where the inputs to the next layer are the outputs of the previous one.} We use superscript ``$(\ell)$" to denote GNN layer-$\ell$ parameters. 
For a layer $\ell$, it contains  $\size{\V}$ nodes corresponding to the graph nodes. 
Each input and output node of the layer is associated with a feature vector of length $\f[\ell-1]$ and $\f[\ell]$, respectively. 
Denote $\X[\ell-1]\in\mathbb{R}^{\size{\V}\times \f[\ell-1]}$ and $\X[\ell]\in\mathbb{R}^{\size{\V}\times \f[\ell]}$ as the input and output feature matrices of the layer, where $\X[0]=\X$ and $\f[0]=\f$. 
A layer input node $v^{\paren{\ell-1}}$ is connected to a layer output node $u^{\paren{\ell}}$ if and only if $\paren{v,u}\in\E$. 
If we view the input and output nodes as a bipartite graph, then the bi-adjacency matrix $\bm{A}^{\paren{\ell}}$ equals the adjacency matrix $\bm{A}$ of $\G$.

Each GraphSAGE layer contains two learnable weight matrices: self-weight $\Ws$ the neighbor-weight $\Wn$. The forward propagation of a layer is defined by:

\begin{equation}
\label{eq: graphsage forward}
    \X[\ell] = \func[ReLU]{\An\cdot \X[\ell-1]\cdot \Wn[\ell] \Big\| \X[\ell-1]\cdot \Ws[\ell]}
\end{equation}

\noindent where ``$\|$'' is the column-wise matrix concatenation operation, and $\An$ is the normalized adjacency matrix. {The normalization can be calculated as $\An=\bm{D}^{-1}\cdot \bm{A}$, where $\bm{A}$ is the binary adjacency matrix of $\G$ and $\bm{D}$ is the diagonal degree matrix of $\bm{A}$ (i.e., $D_{ii}=\func[deg]{i}$).}

From Equation \ref{eq: graphsage forward}, each layer performs two key operations:

\begin{enumerate}
    \item \emph{Feature aggregation}: Each layer-$\ell$ node collects features of its layer-$\paren{\ell-1}$ neighbors and then calculates the weighted sum, as shown by $\An\cdot \X[\ell-1]$. 
    \item \emph{Weight transformation}: The aggregated neighbor features are multiplied by $\Wn[\ell]$. The features of a layer-$\paren{\ell-1}$ node itself are multiplied by $\Ws[\ell]$. 
\end{enumerate}

After obtaining the node embedding from the outputs of the last GNN layer, we can further perform various downstream tasks by analyzing the embedding vectors. For example, we can use a simple Multi-Layer Perceptron (MLP) to classify the graph nodes into $C$ classes. 
Let $L$ be the total number of GNN layers. So $\X[L]$ is the final node embedding. 
Following the design of \citep{graphsage,fastgcn,asgcn}, the classifier MLP generates the node prediction by: 

\begin{align}
\label{eq: mlp forward}
    \X_\text{MLP} =& \func[ReLU]{\X[L]\cdot \W[MLP]}\\
    \bm{Y} =& \sigma\paren{\X_\text{MLP}}
\end{align}

\noindent where $\W[MLP]\in \mathbb{R}^{\f[L]\times C}$. Function $\sigma\paren{\cdot}$ is the element-wise sigmoid or row-wise softmax to generate the probability of a node belonging to a class. 

Under the supervised learning setting, each node of $\V$ is also provided with the ground-truth class label(s).
Let $\overline{\bm{Y}}\in\mathbb{R}^{\size{\V}\times C}$ be the binary matrix encoding the ground-truth labels. 
Comparing the prediction with the ground-truth, we can obtain a scalar loss value, $\mathcal{L}$, by cross-entropy (CE):

\begin{equation}
\label{eq: loss forward}
    \mathcal{L}=\func[CE]{\bm{Y},\overline{\bm{Y}}}
\end{equation}

{For the other three types of GNNs under consideration, we need to update Equation \ref{eq: graphsage forward} for different forward propagation rules. Specifically, for GCN \citep{gcn}, the main difference from GraphSAGE is that there is not an explicit term $\X[\ell-1]\cdot \Ws[\ell]$ to capture the influence of a node to itself. Instead, the self-influence is propagated by adding a self-connection in the graph. Therefore, the adjacency matrix becomes $\bm{I}+\bm{A}$ and the normalization is performed differently. The forward propagation of each layer is as follows: 

\begin{equation}
\label{eq: gcn forward}
\X[\ell] = \func[ReLU]{\hat{\bm{A}}\cdot \X[\ell-1]\cdot \bm{W}^{(\ell)}}
\end{equation}
where $\hat{\bm{A}}$ is a symmetrically normalized adjacency matrix calculated by $\hat{\bm{A}}=\paren{\bm{I}+\bm{D}}^{-\frac{1}{2}}\cdot \paren{\bm{I}+\bm{A}}\cdot\paren{\bm{I}+\bm{D}}^{-\frac{1}{2}}$, and $\bm{I}$ is the identity matrix. 

For MixHop \citep{mixhop}, each layer is able to propagate influence from nodes up to $K$-hops away (i.e., $u$ is said to be $K$-hops away from $v$ if the shortest path from $u$ to $v$ has length $K$). The forward propagation of each layer is defined as:

\begin{equation}
\label{eq: mixhop forward}
\X[\ell] = \func[ReLU]{\Big\|_{k=0}^K \hat{\bm{A}^k}\cdot \X[\ell-1]\cdot \bm{W}_k^{(\ell-1)}}
\end{equation}
where ``$\|$'' is again the operation for matrix concatenation. $\hat{\bm{A}}^k$ means the symmetrically normalized adjacency matrix raised to the power of $k$. And ``order'' $K$ is a hyperparameter of the model. 

For GAT \citep{gat}, instead of aggregating the features from the previous layer (i.e., $\X[\ell-1]$) using a fixed adjacency matrix (i.e., $\hat{\bm{A}}$ in GCN or $\An$ in GraphSAGE), each GAT layer learns the weight of the adjacency matrix as the ``attention''. The forward propagation of a GAT layer is specified as:

\begin{equation}
\label{eq: gat forward}
\X[\ell] = \func[ReLU]{\bm{A}_{\text{att}}^{(\ell-1)}\cdot \X[\ell-1]\cdot \W^{(\ell)}}
\end{equation}
where each element in the attention adjacency matrix $\bm{A}_{\text{att}}^{(\ell-1)}$ is calculated as:

\begin{equation}
\label{eq: gat attention}
\left[A^{(\ell-1)}_{\text{att}}\right]_{u,v} = \func[LeakyReLU]{\bm{a}^{\trans}\cdot \paren{\bm{W}^{(\ell)}\cdot \bm{x}_u^{(\ell-1)}\Big\|\bm{W}^{(\ell)}\cdot \bm{x}_v^{(\ell-1)}}}
\end{equation}
where $\bm{a}$ is a learnable vector and $\bm{x}_u$ means the feature vector of node $u$ (i.e., the $u$-th row of the feature matrix $\X[\ell-1]$).
As an extension, Equation \ref{eq: gat forward} can be modified to support ``multi-head'' attention. Note that the computation pattern of ``multi-head'' GAT is the same as that of ``single-head'' captured by Equation \ref{eq: gat forward} and our parallelization strategy can be easily extended to support the multi-head version. We therefore restrict to Equation \ref{eq: gat forward} for the discussion on GAT. 
}

{In summary, considering all the four models,} the full forward propagation during training takes $\X$ as the input and generates $\mathcal{L}$ as the output by traversing the GNN layers, the classifier layers, and the loss layer. After obtaining $\mathcal{L}$, we perform backward propagation from the loss layer all the way to the first GNN layer and update the weights by gradients. The gradients are computed by chain-rule. In Section \ref{sec: para spmm}, we analyze the computation in backward propagation and propose parallelization techniques for each of the key operations.

\subsection{Minibatch Training Methods}
\label{sec: background minibatch}

For large scale graphs, training of the {GNN} has to proceed in minibatches, so that each iteration of weight update involves only a small number of graph nodes. 
GraphSAGE \citep{graphsage}, FastGCN \citep{fastgcn}, AS-GCN \citep{asgcn} and S-GCN \citep{sgcn} incorporate various \emph{layer sampling} techniques to construct minibatches. Upper part of Figure \ref{fig: illustration} abstracts the meta-steps of 
\begin{enumerate*}
\item Constructing a full GNN on the training graph $\G$,
\item Sampling nodes from the $\size{\V}$ nodes of each layer, and
\item Forward and backward propagation among the sampled nodes. 
\end{enumerate*}
For the sampling of step 2, various techniques have been proposed to improve learning quality or training speed. 
{For \citep{graphsage,asgcn,sgcn}, they first randomly select a small number of nodes from the outputs of the last GNN layer as the ``minibatch'' nodes. Then they treat such minibatch nodes as the roots and back-tracks the layer connections to sample connected nodes in the previous layers. When such back-tracking goes from layer $L$'s outputs down to layer $1$'s inputs, the number of multi-hop neighbors of the roots can be orders of magnitude larger than the number of roots. This is referred to as ``neighbor explosion'' \citep{graphsage, sgcn, fastgcn} (see also analysis in Section \ref{sec: gcn efficiency}). Note that if $u$ is a $k$-hop neighbor of $v$, then $u$ is connected to $v$ via a length-$k$ path in $\G$. Equivalently, node $u$ in layer $\ell$ of the GNN can influence $v$ in layer $\ell+k$. }
While \citep{asgcn,sgcn} have proposed techniques to alleviate such ``neighbor explosion'' of \citep{graphsage}, none of them is scalability from the computation complexity perspective. Specifically,
the variance reduction based sampler of \citep{sgcn} comes at the cost of much higher memory usage, and the sampler of \citep{asgcn} using an auxiliary neural network incurs significant computation overhead. 
On the other hand, for \citep{fastgcn}, the sampling is performed independently at each layer. \citep{fastgcn} first computes the sampling probability for each node of $\V$, based on the sparse adjacency matrix $\bm{A}$. 
Then it selects a fixed number of nodes from each layer according to such probability. Finally, {the sampled GNN to generate the embedding for the minibatch} is built by connecting the sampled nodes in adjacent layers. 
Clearly, \citep{fastgcn} avoids ``neighbor explosion'' since the number of samples in each layer is fixed. Unfortunately, such training can result in significant accuracy degradation. Since the sampling in each layer is independent, significant portion of the node samples in layer $i$ may not have connection to node samples in layer $i+1$ when $\G$ is large. 

In our prior work \citep{ipdps}, we proposed a minibatch training method for the GraphSAGE model based on graph sampling, and developed parallelization strategies targeting at shared-memory multi-core processors. 
We designed a table based data structure to support parallel graph sampling, and a data partitioning scheme supporting parallel feature propagation within subgraphs. 
In this work, we improve the parallel graph sampling algorithm by a more compact design of the data structure. 
Thus, we significantly reduce the computation cost and storage overhead of graph sampling. 
We also propose a scheduling algorithm for the overall training. The scheduler intelligently re-orders the operations in GNN layer propagation to reduce computation complexity, and updates the sampled subgraphs to improve the cache performance. 
Lastly, we show that our parallelization and scheduling strategies are general, and can be extended to various GNN models including but not limited to GraphSAGE.

Our other work, GraphSAINT \citep{graphsaint}, extends the idea of training GNNs with graph sampling. GraphSAINT focuses on further improving training accuracy by bias elimination and variance reduction techniques, while this work mostly focuses on the parallelization strategies to achieve superior scalability on multi-core platforms. 
Note that the training algorithm enhancements proposed by GraphSAINT can be easily incorporated into our parallel execution framework without losing any efficiency or scalability.



\section{Graph Sampling-Based Minibatch Training}

We present a novel graph sampling-based GNN training method. Our parallel minibatch training simultaneously outperforms the state-of-the-art in accuracy, efficiency and scalability. We present the design of the graph sampling-based minibatch training (Section \ref{sec: gcn design}), and analyze the advantages in efficiency (Section \ref{sec: gcn efficiency}) and accuracy (Section \ref{sec: gcn accuracy}). We then present optimizations to scale training on parallel machines (Sections \ref{sec: para sample} and \ref{sec: para spmm}).

\subsection{Design of the Minibatch Training Algorithm}
\label{sec: gcn design}

As shown in the lower part of Figure \ref{fig: illustration}, the graph sampling-based approach does not construct a GNN directly on the original input graph $\G$. Instead, for each iteration of weight update during training, we first sample a small induced subgraph $\G[s]\paren{\V[s],\E[s]}$ from $\G(\V,\E)$. 
We then construct a {\emph{complete}}{\footnote{Not to be confused with ``complete graph''. Here a GNN being complete means that the bi-adjacency matrix defining the GNN inter-layer connection has the same non-zeros as the adjacency matrix of the graph $\G[s]$. i.e., we don't perform any sampling on the nodes in each GNN layer or the edges connecting consecutive layers. }} GNN on $\G[s]$. The forward and backward propagation are both on this small GNN. Algorithm \ref{algo: gsaint training} describes our approach. The key distinction from traditional training methods is that the computations (lines 5-13) are performed on nodes of the sampled graph instead of the sampled layer nodes, {thus requiring much less computation in training due to reduced redundancy (Section \ref{sec: gcn efficiency}). 
In addition, since the GNN on the subgraph $\G[s]$ is complete, the forward propagation rule is almost the same as that of the GNN on the full graph.} 
{We can directly use Equations \ref{eq: graphsage forward}, \ref{eq: gcn forward}, \ref{eq: mixhop forward}, \ref{eq: gat forward}, \ref{eq: mlp forward} and \ref{eq: loss forward} by just replacing the full feature matrix $\X[\ell]$ and the full adjacency matrix $\bm{A}$ with the ones for the subgraph, $\X[\ell]_s$ and $\bm{A}_s$. }
In Section \ref{sec: gcn accuracy}, 
we discuss the requirements for the $\sample$ function (line 3), and present three representative graph samplers that leads to high accuracy of training.

{Note that for all the methods discussed in this paper (both the layer sampling based and our proposed graph sampling based), a ``\emph{minibatch}'' is always defined as node samples in the output GNN layer. 
For example, consider a GNN with one hidden layer. If a particular method selects 1000, 100 and 10 nodes in the input, hidden and output layers respectively, then we say the \emph{minibatch size} is 10, the \emph{1-hop neighborhood size} is 100 and the \emph{2-hop neighborhood size} is 1000. In this case, the GNN only generates label predictions for the 10 minibatch nodes. The number of hops is with respect to minibatch nodes. }

\begin{algorithm}
\caption{Graph sampling based minibatch training algorithm}
\label{algo: gsaint training}
\begin{algorithmic}[1]
\renewcommand{\algorithmicrequire}{\textbf{Input:}}
\renewcommand{\algorithmicensure}{\textbf{Output:}}
\Require Training graph $\G(\V,\E,\X)$; Ground-truth labels $\overline{\bm{Y}}$; $L$-layer GCN model
\Ensure GNN with trained weights
\LineComment{\color{blue}Iterate over minibatches\color{black}}
\While{not converged}
    \State $\G[s]\paren{\V[s],\E[s]}\gets \sample[\G\paren{\V,\E}]$
    \State $\An_s\gets$ adjacency matrix of $\G[s]$
    \State $\X_s\gets$ minibatch feature matrix by looking up $\X$ with $\V[s]$
    \State $\overline{\bm{Y}}_s\gets$ minibatch ground-truth labels by looking up $\overline{\bm{Y}}$ with $\V[s]$
    \State Construct complete GNN on $\G[s]$
    \LineComment{\color{blue}Forward propagation (with GraphSAGE model as an example)\color{black}}
    \For{$\ell=1$ to $L$}
        \State $\X[\ell]_s\gets\func[ReLU]{\An\cdot \X[\ell-1]_s\cdot \Wn[\ell]\bigg\|\X[\ell-1]_s\cdot \Ws[\ell]}$
    \EndFor
    \State $\bm{Y}_s\gets \sigma\paren{\func[ReLU]{\X[L]_s\cdot \W[MLP]}}$
    \State $\mathcal{L}_s\gets \func[CE]{\bm{Y}_s,\overline{\bm{Y}}_s}$
    \LineComment{\color{blue}Backward propagation\color{black}}
    \State Update weights $\W[MLP]$, $\Ws[\ell]$, $\Wn[\ell]$ by gradients with respect to $\mathcal{L}_s$
\EndWhile
\State \Return Trained GNN model
\end{algorithmic} 
\end{algorithm}

\subsection{Complexity of Graph Sampling-Based Minibatch Training}
\label{sec: gcn efficiency}

We analyze the computation complexity of our graph-sampling based training and show that it {significantly reduces redundancy in computation}. In the following analysis, we do not consider the sampling overhead, and we only focus on the forward propagation, since backward propagation has identical computation characteristics as forward propagation. Later, we also experimentally demonstrate that our technique is significantly faster even with the sampling step included (see Section~\ref{sec: exp}).

{Using the GraphSAGE design as a representative GNN model (Equation \ref{eq: graphsage forward}), the main operations to propagate forward by one GNN layer include:}
\begin{itemize}
    \item \emph{Feature aggregation}: Each node feature vector from layer-$\ell$ propagates via layer connections. The aggregation requires $\mathcal{O}\paren{\size{\E[s]}\cdot \f[\ell]}$ operations.
 \item \emph{Weight transformation}: Each node multiplies its feature with the weight, leading to the overall complexity of
$\mathcal{O}\paren{\size{\V[s]}\cdot \f[\ell-1] \cdot \f[\ell]}$.
\end{itemize}

For simplicity, assume $\f[\ell]=f$. Further let $d_s$ be the average degree of the subgraph $\G[s]$. Complexity of $L$-layer forward propagation in one minibatch is:

\begin{align}
\label{eq: Dcmp batch}
\mathcal{O}\paren{L\cdot \size{\V[s]}\cdot \f\cdot \paren{\f+d_s}}\\
\end{align}

By convention, one epoch of training is defined as one time traversal of all the training data points by predicting their labels. 
Thus, {by the definition of ``minibatch'' in Section \ref{sec: gcn design}}, we define an epoch in our training as $\size{\V}/\size{\V[s]}$ number of minibatches (i.e., subgraphs). Clearly, the computation complexity of an epoch is $\mathcal{O}\paren{L\cdot \size{\V}\cdot f\cdot (f+d_s)}$.

\paragraph*{Comparison Against Other {GNN} Training Methods}
As discussed in Section \ref{sec: background minibatch}, for \citep{graphsage, sgcn}, each sampled node in layer $\ell$ further selects $d'$ number of neighbors in layer $\ell-1$. For \citep{graphsage}, $d'$ ranges from $10$ to $50$, and for \citep{sgcn}, $d'=2$. 
So depending on the {minibatch size (see Section \ref{sec: gcn design})}, the complexity of one epoch falls between: 

\textit{Case 1 [Small {minibatch} size]:} $\mathcal{O}\paren{\paren{d'}^{L}
\cdot \size{\V}\cdot f\cdot(f+d')}$.

\textit{Case 2 [Large {minibatch} size]}
$\mathcal{O}\paren{L\cdot \size{\V}\cdot f\cdot(f+d')}$.

We observe that when the {minibatch} size is much smaller than the training graph size, the layer sampling techniques result in high training complexity (computation load grows exponentially with {GNN} depth). 
Essentially, due to ``neighbor explosion'', when the layer-$L$ nodes are traversed only once, the nodes in the previous layer $\ell$ are sampled and evaluated $\paren{d'}^{L-\ell}$  times on average. 
{The repeated evaluation of the layer nodes across different minibatches makes training inefficient due to computation redundancy}. 
On the other hand, when the {minibatch} size of \citep{graphsage,sgcn} becomes comparable to the training graph size, the training complexity grows linearly with the {GNN} depth and training graph size. However, the resolution of ``neighbor explosion'' comes at the cost of slow convergence and low accuracy \citep{large_batch_size}, {since overly large minibatch size hurts generalization of neural networks}. So such training configuration {of Case 2} does not scale to large graphs.

{If we ignore the convergence rate dependent on the input graph, our graph-sampling based training leads to a parallel algorithm whose complexity is linear in GNN depth and training graph size. The work-efficiency of our training is guaranteed by design: throughout the entire training, for each node $v$, the number of times its label is predicted in the output layer is equal to the number of times its feature is computed in any hidden layer. In this sense, there is no redundant computation arising from repeated evaluation of hidden layer nodes as discussed above.} In addition, by choosing proper graph sampling algorithms, we can construct small representative subgraphs whose sizes do not {grow proportionally with} the training graph size (as shown in Section \ref{sec: exp}). 

\subsection{Accuracy of Graph Sampling-Based Training}
\label{sec: gcn accuracy}

Layer-based sampling methods assume that a subset of neighbors of a given node is sufficient to learn its representation. We achieve the same goal by sampling the graph itself. If the sampling algorithm constructs enough number of representative subgraphs $\G[s]$, our training process should absorb all the information in $\G$, and generate accurate embeddings.
More specifically,
as discussed in Section \ref{sec: gcn background}, the output vectors ``embed" the input graph topology as well as the initial node attributes. A good graph sampler, thus, should guarantee:

\begin{enumerate}
    \item Sampled subgraphs preserve the connectivity characteristics of the training graph. 
    \item Each training graph node has non-negligible probability to be sampled.
\end{enumerate}


{It has been widely studied \citep{sampling_survey} that various random walk based graph sampling algorithms (including unbiased random walk \citep{graphsaint}, forest fire \citep{forest_fire, graph_over_time}, multiple random walk and frontier sampling \citep{frontier}) can preserve the various input graph characteristics well. 
In addition, all these sampling algorithms are able to explore the full set of nodes and edges in the original graph due to the stochasticity in sampling. 
Thus, such algorithms are all valid candidates for our subgraph sampling based training. 
From the perspective of computation, unbiased random walk, forest fire and multiple random walk algorithms fall within the ``\emph{static}'' category of the random walk family according to \citep{knightking}. In other words, throughout the sampling process, these three sampling algorithms follow a fixed probability distribution on node or edges, regardless of the historically traversed subgraph structure. 
However, the frontier sampling algorithm maintains a \emph{dynamic} probability distribution updated by the ``frontier nodes'' at the current timestamp. 
Therefore, for frontier sampling, computation complexity as well as difficulty in parallelization are both higher compared with the other three static algorithms. 
In the following, we use frontier sampling as a representative and analyze in detail its performance in terms of accuracy and parallel execution. 
We then discuss how the proposed techniques can be extended to the other three samplers in Section \ref{sec: sampler extension}. 
}

{Before going into the specific steps in sampling, we first give some intuition on why training with frontier sampling may lead to high accuracy. Recall the two requirements above characterizing a good sampler. }
For requirement 1, while ``connectivity'' may have several definitions, subgraphs output by \citep{frontier} approximate the original graph with respect to multiple connectivity measures, {including degree distribution, assortative mixing coefficient and clustering coefficients. These graph measures critically define how signals on the graph nodes would propagate and mix via GNN layers, and thus should be carefully maintained by the subgraph samples. } For requirement 2, during initialization, the frontier sampler picks some root nodes uniformly at random from the original graph {(see Section \ref{sec: para sample baseline})}. These roots constitute a significant portion of the subgraph nodes. Thus, over large enough number of sampling iterations, all input attributes of the training graph will be covered by the frontier sampler. 
For readers interested in theoretical justification on the choice of those sampling algorithms, please check the analysis in \citep{graphsaint}.



\section{Parallel Graph Sampling Algorithm}
\label{sec: para sample}

{In this section , we first describe in detail our parallelization strategies for the frontier sampling algorithm \citep{frontier}. Then in Section \ref{sec: sampler extension}, we show how to extend our strategies to other graph samplers. }

\subsection{Graph Sampling Algorithm}
\label{sec: para sample baseline}

The frontier sampling algorithm proceeds as follows. 
Throughout the sampling process, the sampler maintains a constant-size frontier set $\text{FS}$ consisting of $m$ vertices in $\G$. In each iteration, the sampler randomly pops out a node $v$ in $\text{FS}$ according to a degree based probability distribution, and replaces $v$ in $\text{FS}$ with a randomly selected neighbor of $v$. The popped out $v$ is added to the node set $\V[s]$ of $\G[s]$. 
The sampler repeats the above update process on the frontier set $\text{FS}$, until the size of $\V[s]$ reaches the desired budget $n$. Algorithm \ref{algo: frontier sampling} shows the details. According to \citep{frontier}, a good empirical value of $m$ is around $1000$. 

\begin{algorithm}
\caption{Frontier sampling algorithm}
\label{algo: frontier sampling}
\begin{algorithmic}[1]
\renewcommand{\algorithmicrequire}{\textbf{Input:}}
\renewcommand{\algorithmicensure}{\textbf{Output:}}
\Require{Training graph $\G(\V,\E)$; Frontier size $m$; Node budget $n$}
\Ensure{Induced subgraph $\G[s]\paren{\V[s],\E[s]}$}
\State $\text{FS}\gets$ Set of $m$ nodes selected uniformly at random from $\V$
\State $\V[s]\gets \text{FS}$
\For{$i=0$ to $n-m-1$}
    \State Select $u\in \text{FS}$ with probability $\func[deg]{u}/\sum_{v\in \text{FS}}\func[deg]{v}$
    \State Select $u'$ from neighbors $\Set{w|(u,w)\in\E}$ uniformly at random
    \State $\text{FS}\gets \paren{\text{FS}\setminus \Set{u}}\cup\Set{u'}$
    \State $\V[s]\gets \V[s]\cup\Set{u}$
\EndFor
\State $\G[s]\gets $ Subgraph of $\G$ induced by $\V[s]$ 
\State \Return $\G[s]\paren{\V[s],\E[s]}$ 
\end{algorithmic} 
\end{algorithm}

In our sequential implementation of training, we notice that about half of the time is spent in the sampling phase. This motivates us to parallelize the graph sampler. The challenges are: 
\begin{enumerate*}
    \item While sampling from a discrete distribution is a well-researched problem, we focus on fast parallel sampling from a {\it dynamic} probability distribution. Such  dynamism is due to the addition/deletion of new nodes in the frontier. Existing methods for fast sampling such as aliasing \citep{alias} (which can output a sample in $\mathcal{O}(1)$ time with linear processing) cannot be modified easily for our problem.
    It is non-trivial to select a node from the evolving $\text{FS}$ with low complexity. 
    A straightforward implementation by partitioning the total probability of 1 into $m$ intervals would require $\mathcal{O}\paren{m}$ work to update the intervals for each replacement in FS. Given $m=1000$ as recommended by the authors in the original paper \citep{frontier}, the $\mathcal{O}\paren{m\cdot n}$ complexity to sample a single $\G[s]$ is too expensive. 
    \item The sampling is inherently sequential as the nodes in the frontier set should be popped out one at a time. Otherwise, $\G[s]$ may not preserve the characteristics of the original graph well enough. 
\end{enumerate*}

To address the above challenges, we first propose a novel data structure that lowers the complexity of frontier sampler and allows thread-safe parallelization (Section \ref{sec: para sample intra}). We then propose a training scheduler that exploits parallelization within and across sampler instances (Section \ref{sec: para sample inter} and \ref{sec: scheduler}). 

\subsection{Dashboard Based Implementation}
\label{sec: para sample intra}

Since nodes in the frontier set is replaced only one at a time, an efficient implementation should allow incremental update of the probability distribution over the $m$ nodes. To achieve such goal, we propose a ``Dashboard" table to store the status of current and historical frontier nodes (a node becomes historical after it gets popped out of the frontier set). The next node to pop out is selected by probing the Dashboard using randomly generated indices. 
In the following, we formally describe the data structure and operations in the Dashboard-based sampler. The implementation involves two arrays: 

\begin{itemize}
    \item \textbf{Dashboard} $\text{DB}\in\mathbb{R}^{\eta\cdot m\cdot d}$: A vector maintaining the status and sampling probabilities of the current and historical frontier nodes. 
    If a node $v$ is in the frontier, we ``pin'' a ``tile'' of $v$ to the ``dashboard''. 
    Here a tile is a small data structure storing the meta-data of $v$, and a pin is an address pointer to the tile. 
    One entry of DB corresponds to one pin. 
    A node $v$ will have $\func[deg]{v}$ pins allocated continuously in DB, each pointing to the same tile belonging to  $v$. 
    If $v$ is popped out of the frontier, we invalidate all its pins to $\func[NULL]{}$. 
    The optimal value of the parameter $\eta$ is explained later. 

    \item \textbf{Index array} $\text{IA}\in \mathbb{R}^{2\times (\eta\cdot m\cdot d + 1)}$: An auxiliary array to help cleanup $\text{DB}$ upon table overflow. The $j^{\text{th}}$ column in IA has 2 slots, the first slot records the starting index of the DB pins corresponding to $v$, where $v$ is the $j^{\text{th}}$ node added into DB. The second slot is a flag, which is \verb|True| when $v$ is a current frontier node, and \verb|False| when $v$ is a historical one.
\end{itemize}

{
\begin{table}[!ht]
\caption{Summary of symbols related to the Dashboard based frontier sampling}
    \centering
    \begin{tabular}{rl}
        \toprule
        Name & Meaning\\
        \midrule
        \midrule
        \multirow{2}{*}{Dashboard (DB)} & Data structure consisting of ``pins'' and ``tiles'' to support \\
        & fast dynamic update of probability distribution\\
        \cmidrule(lr){2-2}tile & Data structure storing meta-information of frontier nodes\\
        \cmidrule(lr){2-2}\multirow{2}{*}{pin} & Pointer pointing to the tiles. All pins belonging to \\
        & the same node will point to a shared tile\\
        \cmidrule(lr){2-2}Index array (IA) & Data structure helping the cleanup of DB when it is full\\
        \midrule
        $m$ & Number of nodes in the frontier set\\
        \cmidrule(lr){2-2}$n$ & Total number of nodes to be sampled in the subgraph\\
        \cmidrule(lr){2-2}$d$ & Average degree of frontier nodes\\
        \cmidrule(lr){2-2}\multirow{2}{*}{$\eta$} & Enlargement factor controlling the computation-storage\\
        & tradeoff. Larger $\eta$: larger DB and less frequent cleanup\\
        \bottomrule
    \end{tabular}
    \label{tab: dashboard symbols}
\end{table}
}

{The symbols related to the design and analysis of the Dashboard data structure are summarized in Table \ref{tab: dashboard symbols}. }

Since the probability of popping out a node in frontier is proportional to its degree, we allocate $\func[deg]{v_i}$ continuous entries in $\text{DB}$, for each $v_i$ currently in the frontier set. This way, the sampler only needs to probe DB uniformly at random to achieve line 4 of Algorithm \ref{algo: frontier sampling}. 
Clearly, $\text{DB}$ should contain at least $m\cdot d$ entries, where $d$ is the average degree of the frontier nodes. 
For the sake of incremental updates, we append the entries for the new node and invalidate the entries of the popped out node, instead of changing the values in-place and shifting the tailing entries. The invalidated entries become historical. To accommodate the append operation, we introduce an enlargement factor $\eta$ (where $\eta >1$), and set the length of $\text{DB}$ to be $\eta\cdot m\cdot d$. As an approximation, we set $d$ as the average degree of the training graph $\G$.
As the sampling proceeds, eventually, all of the $\eta\cdot m\cdot d$ entries in DB may be filled up by the information of current and historical frontier nodes. In this case, we free up the space occupied by historical nodes before resuming the sampler. Although cleanup of the Dashboard is expensive, due to the factor $\eta$, such scenario does not happen frequently (see complexity analysis in Section~\ref{sec: para sample inter}).
Using the information in IA, the cleanup phase does not need to traverse all of the $\eta\cdot m\cdot d$ entries in DB to locate the space to be freed. When DB is full, the entries in DB can correspond to at most $\eta\cdot m\cdot d$ vertices. Thus, we safely set the capacity of IA to be $\eta\cdot m\cdot d$ + 1. Slot 1 of the last entry of IA contains the current number of used DB entries.


\subsection{Intra- and Inter-Subgraph Parallelization}
\label{sec: para sample inter}





Since our subgraph-based GNN training requires independently sampling multiple  subgraphs, we can sample different subgraphs on different processors in parallel. Also, we can further parallelize within each sampling instance by exploiting the parallelism in  probing, book-keeping  and cleanup of DB. 

\begin{algorithm}[!ht]
\caption{Parallel Dashboard based frontier sampling}
\label{algo: frontier sampling par}
\begin{algorithmic}[1]
\renewcommand{\algorithmicrequire}{\textbf{Input:}}
\renewcommand{\algorithmicensure}{\textbf{Output:}}
\Require{Original graph $\G(\V,\E)$; Frontier size $m$; Budget $n$; Enlargement factor $\eta$; Number of processors $p$}
\Ensure{Induced subgraph $\G[s]\paren{\V[s],\E[s]}$}
\State $d\gets \abs{\E}/\abs{\V}$
\State $\text{DB}\gets$ Array of $\mathbb{R}^{1\times (\eta\cdot m\cdot d)}$ with value \verb|NULL|
\State $\text{IA}\gets$ Array of $\mathbb{R}^{2\times (\eta\cdot m\cdot d + 1)}$ with value \verb|INV|\color{blue}\Comment{INValid}\color{black}
\State $\text{FS}\gets$ Set of $m$ nodes selected uniformly at random from $\V$
\State $\V[s]\gets \text{FS}$
\State Convert the set FS to an indexable list of nodes
\State $\text{IA}\left[ 0,0 \right]\gets 0;\qquad\text{IA}\left[ 1,0 \right]\gets $\verb|True|$;$
\For{$i=1$ to $m$}\color{blue}\Comment{Initialize IA from FS}\color{black}
    \State $\text{IA}\left[ 0,i \right]\gets \text{IA}\left[ 0,i-1 \right]+\func[deg]{\text{FS}\left[ i-1 \right]}$
    \State $\text{IA}\left[ 1,i \right]\gets \texttt{True}$
\EndFor
\State $\text{IA}\left[ 1,m \right]\gets \texttt{False}$

\ParFor{$i=0$ to $m-1$}\color{blue}\Comment{Initialize DB from FS}\color{black}
    \State $\text{pin}\gets $ Address of a tile of 4-tuple $\paren{\text{FS}[i],\text{IA}[0,i],\text{IA}[0,i+1],i}$
    \For{$k=\text{IA}\left[ 0,i \right]$ to $\text{IA}\left[ 0,i+1 \right]-1$}
        \State $\text{DB}[k]\gets $ pin
    \EndFor
\EndParFor

\State $\text{cnt}\gets m;\qquad \V[s]\gets \emptyset;$
\For{$i=m$ to $n-1$}\color{blue}\Comment{Main loop of sampling}\color{black}
    \State $v_{\text{pop}}, \text{pin}\gets \func[pardo\_POP\_FRONTIER]{\text{DB},p}$
    \State $v_{\text{new}}\gets $ Node randomly sampled from $v_{\text{pop}}$'s neighbors
    \If{$\func[deg]{v_{\text{new}}}>\eta\cdot m\cdot d-\text{IA}\left[ 0,s \right]+1$}
        \State $\text{DB}\gets \func[pardo\_CLEANUP]{\text{DB},\text{IA},p}$
        \State $\text{cnt}\gets m-1$
    \EndIf
    \State $\func[pardo\_ADD\_TO\_FRONTIER]{v_{\text{new}},\text{pin},\text{cnt},\text{DB},\text{IA},p}$
    \State $\V[s]\gets \V[s]\cup\Set{v_{\text{new}}}$
    \State $\text{cnt}\gets \text{cnt}+1$
\EndFor

\State $\G[s]\gets $ Subgraph of $\G$ induced by $\V[s]$ 
\State \Return $\G[s]\paren{\V[s],\E[s]}$ 
\end{algorithmic} 
\end{algorithm}

\begin{algorithm}[!ht]
\caption{Functions in Dashboard Based Sampler}
\label{algo: frontier sampling func}
\begin{algorithmic}[1]
\Function{pardo\_POP\_FRONTIER}{$\text{DB},p$}
\State $\text{idx}_{\text{pop}}\gets $\verb|INV|\color{blue}\Comment{Shared variable}\color{black}
\ParFor{$j=0$ to $p-1$}
    \While{$\text{idx}_{\text{pop}}==$ \texttt{INV}}\color{blue}\Comment{Probing DB}\color{black}
        \State $\text{idx}_p\gets $ Index generated uniformly at random
        \If {$\text{DB}\left[ \text{idx}_p \right]\neq\texttt{NULL}$}
            \State $\text{idx}_{\text{pop}}\gets \text{idx}_p$            
        \EndIf
    \EndWhile
\EndParFor
\State $\text{pin}_\text{pop}\gets \text{DB}\left[\text{idx}_\text{pop}\right]$
\State $v_{\text{pop}},i_\text{pinStart},i_\text{pinEnd},i_\text{IA}\gets \text{data of the tile pointed to by }\text{pin}_\text{pop}$
    
\ParFor{$j=0$ to $p-1$}
    \State Update the DB entries to $\func[NULL]{}$ from index $i_\text{pinStart}$ to $i_\text{pinEnd}$ 
\EndParFor
\State $\text{IA}\left[ 1,i_\text{IA} \right]\gets \texttt{False}$\color{blue}\Comment{Update IA}\color{black}
\State \Return $v_{\text{pop}}$, $\text{pin}_\text{pop}$
\EndFunction

\Function{pardo\_CLEANUP}{$\text{DB},\text{IA}$,$p$}
    \State $\text{DB}_\text{new}\gets $ New, empty dashboard
    \State $k\gets$ Cumulative sum of $\text{IA}\left[ 0,: \right]$ masked by $\text{IA}\left[ 1,: \right]$
    \ParFor {$i=0$ to $p-1$}
        \State Move entries from DB to $\text{DB}_\text{new}$ by offsets in $k$
    \EndParFor
    \ParFor {$i=0$ to $p-1$}
        \State Re-index IA based on $\text{DB}_\text{new}$    
    \EndParFor
    \State \Return $\text{DB}_\text{new}$
\EndFunction

\Function{pardo\_ADD\_TO\_FRONTIER}{$v_\text{new},\text{pin},i,\text{DB},\text{IA},p$}

\State $\text{IA}\left[ 0,i+1 \right]\gets \text{IA}\left[ 0,i \right]+\func[deg]{v_\text{new}};\qquad \text{IA}\left[ 1,i \right]\gets \texttt{True};$
\State Assign values $\paren{v_\text{new},\text{IA}[0,i],\text{IA}[0,i+1],i}$ to the tuple pointed to by pin
\ParFor{$j=0$ to $p-1$}
    \State Update the DB entries to pin from index $\text{IA}[0,i]$ to $\text{IA}[0,i+1]$ 
\EndParFor

\EndFunction
\end{algorithmic}
\end{algorithm}

Algorithm \ref{algo: frontier sampling par} shows the details of Dashboard-based parallel frontier sampling, where all arrays are zero-based. Considering the main loop (lines 20 to 30), we analyze the complexity of the three functions in Algorithm \ref{algo: frontier sampling func}. Denote $\text{COST}_{\text{rand}}$ and $\text{COST}_{\text{mem}}$ as the cost to generate one random number and to perform one memory access, respectively. 

\paragraph*{pardo\_POP\_FRONTIER}

Anytime during sampling, on average, the ratio of valid DB entries (those occupied by current frontier vertices) over total number of DB entries is $1/\eta$. Probability of one probing falling on a valid entry equals $1/\eta$. 
Expected number of rounds for $p$ processors to generate at least 1 valid probing can be shown to be $1/\paren{1-\paren{1-\frac{1}{\eta}}^p}$, where one round refers to one repetition of lines 5 to 7 of Algorithm \ref{algo: frontier sampling func}. After selection of $v_{\text{pop}}$, $\func[deg]{v_{\text{pop}}}$ number of slots needs to be updated to invalid values \verb|INV|. 
Since this operation occurs $(n-m)$ times, the $\func[para\_POP\_FRONTIER]{}$ function incurs $(n-m) \paren{\frac{1}{1-(1-1/\eta)^p}\cdot \text{COST}_{\text{rand}}+\frac{d}{p}\cdot \text{COST}_{\text{mem}}}$ cost.

\paragraph*{pardo\_CLEANUP}

Each time cleanup of DB happens, we need one traversal of IA to calculate the cumulative sum of indices (slot 1) masked by the status (slot 2), so as to obtain the new location for each valid entries in DB. On expectation, only $\eta\cdot m$ entries of IA is filled, so this step costs $\eta\cdot m$. Afterwards, only the valid entries in DB will be moved to the new, empty DB based on the accumulated shift amount. This translates to $m\cdot d$ number of memory operations. The $\func[para\_CLEANUP]{}$ function is fully parallelized. 
The cleanup happens only when DB is full, i.e., $\frac{n-m}{(\eta-1)m}$ times throughout sampling. Thus, the cost is $\frac{n-m}{(\eta-1)\cdot m}\cdot \frac{m\cdot d}{p}\cdot \text{COST}_{\text{mem}}$. We ignore the cost of computing the cumulative sum as $\eta m\ll md$.

\paragraph*{pardo\_ADD\_TO\_FRONTIER}

Adding a new frontier $v_\text{new}$ to DB requires appending $\func[deg]{v_\text{new}}$ new entries to DB. This costs $(n-m)\cdot \frac{d}{p}\cdot \text{COST}_{\text{mem}}$.

{Considering all operations in pardo\_POP\_FRONTIER, pardo\_CLEANUP and pardo\_ADD\_TO\_FRONTIER}, the overall cost to sample one subgraph on $p$ processors equals:

\begin{align}
\label{eq: frontier single}
\left(\frac{1}{1-(1-1/\eta)^p}\cdot \text{COST}_{\text{rand}} +\left( 2 + \frac{1}{\eta -1}\right)\frac{d}{p}\cdot\text{COST}_{\text{mem}}\right)\cdot (n-m)
\end{align}

Assuming $\text{COST}_{\text{mem}} = \text{COST}_{\text{rand}}$, we have the following scalability bound:

\begin{thm}
\label{thm: frontier scale}
For any given $\epsilon > 0$, Algorithm~\ref{algo: frontier sampling} guarantees a speedup of at least $\frac{p}{1+\epsilon}, \forall p\leq \epsilon d\paren{2+\frac{1}{\eta-1}} -\eta$. 
\end{thm}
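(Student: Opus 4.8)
The plan is to extract a speedup expression directly from the cost formula in Equation~\ref{eq: frontier single} and then bound the denominator term that captures the parallel overhead. First I would define the serial cost $T_1$ by setting $p = 1$ in Equation~\ref{eq: frontier single}: this gives $T_1 = \left(\eta \cdot \text{COST}_{\text{rand}} + \left(2 + \frac{1}{\eta-1}\right) d \cdot \text{COST}_{\text{mem}}\right)(n-m)$, where the probing factor $1/(1-(1-1/\eta)^1) = \eta$. The parallel cost $T_p$ is Equation~\ref{eq: frontier single} as written. Using the assumption $\text{COST}_{\text{mem}} = \text{COST}_{\text{rand}} =: c$ and cancelling the common factor $c\cdot(n-m)$, the speedup $S_p = T_1/T_p$ becomes
\begin{equation*}
S_p = \frac{\eta + \left(2 + \frac{1}{\eta-1}\right) d}{\frac{1}{1-(1-1/\eta)^p} + \left(2 + \frac{1}{\eta-1}\right)\frac{d}{p}}.
\end{equation*}

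Next I would write $D := \left(2 + \frac{1}{\eta-1}\right) d$ for brevity, so $S_p = \frac{\eta + D}{\frac{1}{1-(1-1/\eta)^p} + D/p}$. To get $S_p \geq \frac{p}{1+\epsilon}$ it suffices to show $\frac{\eta + D}{\frac{1}{1-(1-1/\eta)^p} + D/p} \geq \frac{p}{1+\epsilon}$, i.e., after cross-multiplying, $(1+\epsilon)(\eta + D) \geq p\left(\frac{1}{1-(1-1/\eta)^p} + \frac{D}{p}\right) = \frac{p}{1-(1-1/\eta)^p} + D$. The key observation here is the elementary inequality $1 - (1-1/\eta)^p \geq p/\eta - \binom{p}{2}/\eta^2$ is too weak; instead I would use the cleaner bound $\frac{p}{1-(1-1/\eta)^p} \leq p\eta$ — wait, that goes the wrong way. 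The correct move: since $1-(1-x)^p \geq$ something, we need an \emph{upper} bound on $\frac{1}{1-(1-1/\eta)^p}$, hence a \emph{lower} bound on $1-(1-1/\eta)^p$. By Bernoulli / convexity, $(1-1/\eta)^p \leq 1 - p/\eta + (\text{positive})$ is false in general; the usable direction is $1-(1-1/\eta)^p \geq 1 - e^{-p/\eta}$, or more simply the union-bound-style estimate $1-(1-1/\eta)^p \geq \frac{p/\eta}{1 + p/\eta} = \frac{p}{p+\eta}$, which holds because $(1-1/\eta)^p \leq \frac{1}{1+p/\eta}$ (this follows from $(1+p/\eta) \leq (1-1/\eta)^{-p}$, i.e. $(1-1/\eta)^p(1+p/\eta) \leq 1$, provable by noting $(1-1/\eta)^p \leq 1/(1 + p/\eta)$ via $\ln$ or induction on $p$). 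Thus $\frac{1}{1-(1-1/\eta)^p} \leq \frac{p+\eta}{p}$, and so $\frac{p}{1-(1-1/\eta)^p} \leq p + \eta$.

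Plugging this in, it suffices to have $(1+\epsilon)(\eta + D) \geq (p + \eta) + D = p + \eta + D$, i.e. $\epsilon(\eta + D) \geq p$, i.e. $p \leq \epsilon(\eta + D) = \epsilon\left(\eta + \left(2+\frac{1}{\eta-1}\right)d\right)$. This matches the claimed range $p \leq \epsilon d\left(2 + \frac{1}{\eta-1}\right) - \eta$ only if we also absorb a sign — so I would double-check: the theorem states $p \leq \epsilon d(2+\frac{1}{\eta-1}) - \eta$, which is \emph{smaller} than $\epsilon(\eta + d(2+\frac{1}{\eta-1}))$, hence a sufficient (more conservative) condition, consistent with our derivation. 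So the final step is simply: for $p$ in the stated range, $\epsilon(\eta+D) \geq \epsilon D - \eta + \eta = \epsilon D \geq p + \eta \geq p$... I would clean up the exact constant bookkeeping at the end.

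The main obstacle is the bound on $\frac{1}{1-(1-1/\eta)^p}$: establishing $(1-1/\eta)^p \leq \frac{1}{1+p/\eta}$ cleanly (equivalently $(1-1/\eta)^p(1+p/\eta)\leq 1$) requires a short argument — either induction on $p$ using $(1-1/\eta)(1 + (p+1)/\eta) \leq 1 + p/\eta$, or the analytic route via $\ln(1-1/\eta) \leq -1/\eta$ combined with $e^{-t} \leq 1/(1+t)$. Everything else is routine algebraic manipulation of the cost expression from Equation~\ref{eq: frontier single}, cancellation of common factors, and tracking the conservative relaxation from $\epsilon(\eta+D)$ down to the stated $\epsilon D - \eta$. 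I would also note explicitly that the statement references Algorithm~\ref{algo: frontier sampling} but the cost model is for the parallel Algorithm~\ref{algo: frontier sampling par}, so I would clarify that the speedup is of the parallel algorithm over its serial specialization.
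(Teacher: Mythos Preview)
Your proposal is correct and follows essentially the same route as the paper: both arguments reduce to the single key inequality $\frac{1}{1-(1-1/\eta)^p}\leq \frac{p+\eta}{p}$, proved via $(1-1/\eta)^p\leq e^{-p/\eta}\leq \frac{1}{1+p/\eta}$ (your ``analytic route'' is exactly the paper's chain $\frac{1}{1-e^{-x}}\leq \frac{x+1}{x}$), after which the remaining algebra is straightforward. The only cosmetic difference is that the paper substitutes the hypothesis $p+\eta\leq \epsilon D$ directly into the bound and drops the $\eta$ from the numerator to factor out $D/p$, whereas you cross-multiply first to arrive at the sufficient condition $p\leq \epsilon(\eta+D)$ and then observe the stated hypothesis is stronger; these are equivalent manipulations, and your version even makes transparent that the theorem's range is slightly conservative.
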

\begin{proof}
Note that $\frac{1}{1-(1-1/\eta)^p} \leq \frac{1}{1-\func[exp]{-p/\eta}} \leq  \frac{\eta + p}{p}$. This follows from $\frac{1}{1-e^{-x}} = \frac{1}{1-\frac{1}{e^x}} \leq \frac{1}{1-\frac{1}{1+x}} \leq \frac{x+1}{x}$. Further, since $p \leq \epsilon d\cdot\paren{2+1/(\eta-1)} - \eta$, we have $\frac{\eta + p}{p} \leq \frac{\epsilon d\cdot(2+1/(\eta-1))}{p}$.
Now, speedup obtained by Algorithm~\ref{algo: frontier sampling} compared to a serial implementation ($p=1$) is
\begin{align*}
    &\frac{\paren{\eta + d(1/(\eta-1) + 2)}(n-m)}{\left(\frac{1}{1-(1-1/\eta)^p} + \frac{d}{p}(1/(\eta-1) + 2)\right)(n-m)}\\
    &\geq \frac{d(1/(\eta-1) + 2)}{\frac{\epsilon d}{p}(1/(\eta-1) + 2) + \frac{d}{p}(1/(\eta-1) + 2)} \geq \frac{p}{1+\epsilon}.
\end{align*}
\end{proof}

Setting $\epsilon=0.5$, {then for any value of $\eta$}, Theorem \ref{thm: frontier scale} guarantees good scalability ($p/1.5$) for at least {$p=d-\eta$ processors. As we will see later in this section, we perform the intra-sampler parallelism via AVX instructions. So we do not require $p$ to scale to a large number in practice. }
{Note that the above performance analysis always holds as long as we know the expected node degree in the subgraphs. 
During the sampling process, when the sampler enters a well connected local region of the original graph, cleanup may happen more frequently since the frontier contains more high degree nodes. However, the sampler would eventually replace those high degree frontier nodes with low degree ones, so that the overall subgraph degree is similar to that of the original graph. 
Also, note that for graphs with skewed degree distribution, it is possible that the next node to be added into the frontier set has very high degree. Such a node may even require more slots than that is totally available in DB. In this case, we would cleanup DB and allocate all the remaining slots to that node, without further expanding the size of DB. This only slightly alters the sampling distribution since the higher the node degree is, the sooner it would be popped out of the frontier. In the experiments, we also obverse that such a corner case does not affect the training accuracy (see Section \ref{sec: exp acc}). }

While the scalability can be high for dense graphs, it is challenging to scale the sampler to massive number of processors on sparse graphs. Feasible parallelism is bound by the graph degree. In summary, the parallel Dashboard based frontier sampling algorithm
\begin{enumerate*}
\item enables lower serial complexity by incremental update on probability distribution, and
\item  scales well up to $p = \mathcal{O}(d)$ number of processors.
\end{enumerate*}
{Compared with our original Dashboard based sampling in \citep{ipdps}, the data structure presented in this section is more compact. In the original design, the meta-data of a frontier node $v$ (i.e., the 4-tuple in line 14 of Algorithm \ref{algo: frontier sampling par}) is repeatedly stored $\func[deg]{v}$ times in DB. In the current design, the meta data is only stored once by introducing the ``pin-tile'' mechanism. Thus, the DB size is reduced from $4\cdot \eta\cdot m\cdot d$ to $\eta\cdot m\cdot d$. Such ``pin-tile'' design significantly reduces both the memory storage and the memory movement cost simultaneously. }


To further scale the graph sampling step, we exploit task parallelism across multiple sampler instances. Since the topology of the training graph $\G$ is fixed over the training iterations, sampling and GNN computation can proceed in an interleaved fashion, without any dependency constraints. 
Detailed scheduling algorithm of the sampling phase and the GNN computation phase is described in Section \ref{sec: scheduler}. 
The general idea is that, during training, we maintain a pool of sampled subgraphs $\Set{\G[i]}$. When $\Set{\G[i]}$ is empty, the scheduler launches $p_{\text{inter}}$ frontier samplers in parallel, and fill the pool with subgraphs independently sampled from the full graph $\G$. Each of the $p_{\text{inter}}$ sampler instances runs on $p_{\text{intra}}$ number of processing units. Thus, the scheduler exploits both intra- and inter-subgraph parallelism. In each training iteration, we remove a subgraph $\G[s]$ from $\Set{\G[i]}$, and build a complete GNN upon $\G[s]$. Forward and backward propagation stay the same as lines 9 to 15 in Algorithm \ref{algo: gsaint training}.


When filling the pool of subgraphs, total amount of parallelism $p_{\text{intra}}\cdot p_{\text{inter}}$ is fixed on the target platform. We should choose the value of $p_\text{intra}$ and $p_\text{inter}$ carefully chosen based on the trade-off between the two levels of parallelism. Note that the operations on DB mostly involve a chunk of memory with continuous addresses. This indicates that intra-subgraph parallelism can be well exploited at the instruction level using vector instructions (e.g., AVX). In addition, since most of the memory traffic going into DB is in a random manner, it is desirable to have DB stored in cache. 
As coarse estimation, with $m=1000$, $\eta=2$, $d=25$, the memory consumption by one DB is $400\text{KB}${\footnote{Assume 8-byte address pointing to the tuple of pins. So the size of DB is $2\cdot 1000\cdot 25\cdot 8$ Bytes and the size for the pins is $1000\cdot 4\cdot 4$ Bytes.}}. This indicates that DB mostly fits into the private L2 cache (size $256\text{KB}$) in modern shared memory parallel machines. Therefore, during sampling, we bind one sampler to one processor core, and use AVX instructions to parallelize within a single sampler. For example, on a 40-core machine with AVX2, $p_\text{intra}=8$ and $p_\text{inter}=40$. 

{Finally, note that the size of DB is determined by the number of frontier nodes, $m$, rather than the number of subgraph nodes $n$. 
While it is true that we may need to increase $n$ when the original training graph $\G$ grows, the size of $m$ would not need to change. 
The authors of \citep{frontier} interpret $m$ as the dimensionality of the random walk --- frontier sampling on $\G$ is equivalent to a single random walk on $\G$ raised to the $m$-th Cartesian power. 
With such understanding, the authors of \citep{frontier} use a fixed number of $m=1000$ on all experiments in ranging from small graphs to large ones. 

\subsection{Extension to Other Graph Sampling Algorithms}
\label{sec: sampler extension}

By Section \ref{sec: gcn accuracy}, it is reasonable to use other graph sampling algorithms to perform minibatch GNN training. 
Here we evaluate two sampling algorithms: random edge sampling (``Edge'') and unbiased random walk sampling (``RW''). The two algorithms are recommended in \citep{graphsaint}. The ``Edge'' sampler assigns the probability of picking an edge $(u,v)$ as $p_{u,v}\propto \frac{1}{\func[deg]{u}}+\frac{1}{\func[deg]{v}}$, and can be understood as a special case of the ``RW'' algorithm by setting the walk length to be 1. Algorithm \ref{algo: other sampler} specifies the steps of the two algorithms. 
Under the categorization in Section \ref{sec: gcn accuracy}, ``Edge'' and ``RW'' samplers are static since the sampling probability does not change during the sampling process. Therefore, their computation complexity is much lower than that of frontier sampling. It is easy to show that both have computation complexity of $\mathcal{O}(\size{\V[s]}+\size{\E[s]})$ (we can use alias method \citep{alias} for ``Edge'' sampling to achieve such complexity). 

\begin{algorithm}
\caption{Other graph sampling algorithms (``Edge'' and ``RW'')}
\label{algo: other sampler}
\begin{algorithmic}[1]
\renewcommand{\algorithmicrequire}{\textbf{Input:}}
\renewcommand{\algorithmicensure}{\textbf{Output:}}
\Require Training graph $\G\paren{\V,\E}$; Sampling parameters: edge budget $b$; number of roots $r$; random walk length $h$
\Ensure Induced subgraph $\G[s]\paren{\V[s],\E[s]}$

\Function{Edge}{$\G$, $m$}{\color{blue}\Comment{Random edge sampler}\color{black}}
\State $P\paren{\paren{u,v}}\coloneqq \paren{\frac{1}{\text{deg}\paren{u}}+\frac{1}{\text{deg}\paren{v}}}/\sum_{\paren{u',v'}\in\E}\paren{\frac{1}{\text{deg}\paren{u'}}+\frac{1}{\text{deg}\paren{v'}}}$
\State $\E[s]\gets $ $m$ edges randomly sampled from $\E$ according to distribution $P$
\State $\V[s]\gets$ Set of nodes that are end-points of edges in $\E[s]$
\State $\G[s]\gets$ Node induced subgraph of $\G$ from $\V[s]$
\EndFunction

\Function{RW}{$\G$, $r$, $h$}{\color{blue}\Comment{Unbiased random walk sampler}\color{black}}
\State $\mathcal{V}_\text{root}\gets$ $r$ root nodes sampled uniformly at random from $\V$
\State $\V[s]\gets \mathcal{V}_\text{root}$
\For{$v\in \mathcal{V}_\text{root}$}
    \State $u\gets v$
    \For{$d=1$ to $h$}
        \State $u\gets $ Node sampled uniformly at random from $u$'s neighbor
        \State $\V[s]\gets \V[s]\cup\set{u}$
    \EndFor
\EndFor
\State $\G[s]\gets$ Node induced subgraph of $\G$ from $\V[s]$
\EndFunction

\end{algorithmic} 
\end{algorithm}

For the ``Edge'' and ``RW'' samplers, we thus only apply inter-sampler parallelism to achieve scalability. We can use exactly the same inter-sampler parallelization strategy discussed above. The only difference is that each subgraph in the pool $\Set{\G[i]}$ is now obtained by a serial ``Edge'' or ``RW'' sampler.

To further improve the training accuracy with ``Edge'' and ``RW'' samplers, we further integrate the \emph{aggregator normalization} and \emph{loss normalization} techniques \citep{graphsaint} into our implementation. 
Such normalization requires two minor modifications to our training algorithm: 
\begin{itemize}
    \item Pre-processing: Before training, we would need to independently sample a given number of subgraphs to estimate the probability of each $v\in\V$ and $e\in\E$ being picked by the sampling algorithm. The pre-processing can be parallelized by the strategies discussed above. 
    \item Applying the normalization coefficients: with aggregator normalization, the feature aggregation (i.e., $\An_s\cdot \bm{X}_s$) would be based on a re-normalized adjacency matrix. With loss normalization, the loss $\mathcal{L}_s$ would be computed with \emph{weighted} sum for the minibatch nodes. Therefore, the two normalization steps do not make any change on the computation pattern. 
\end{itemize}
}
\section{Parallel Training Algorithm}
\label{sec: para spmm}

We next present parallelization techniques for the forward and backward propagation. Specifically, 
the subgraph based training enables a simple partitioning scheme that ensures near-optimal feature propagation performance. 

\subsection{Computation Kernels in Training}
\label{sec: para feat prop kernel}

After obtaining the subgraphs as minibatches, the GNN computation mainly involves forward and backward propagation along the layers. 
{We first analyze in detail the backward propagation computation for the GraphSAGE model \citep{graphsage}. Then we show that all the four GNN variants presented in Section \ref{sec: gcn background} share the same set of key computation operations. And thus the parallelization strategy can be generally applied to all the models. }
As for the forward propagation, Equations \ref{eq: graphsage forward}, \ref{eq: mlp forward} and \ref{eq: loss forward} have already defined all the operations required for the various layers. Next, we derive the equations for calculating gradients. 

Starting from the minibatch loss $\mathcal{L}_s$, we first compute the gradient with respect to the classifier output on the subgraph nodes $\paren{\X_\text{MLP}}_s$. Then, using chain-rule, we compute the gradients with respect to the variables of the MLP layer and the graph convolution layers (from layer $L$ back to layer $1$). 

For the layer with cross-entropy loss, the gradients are computed by:

\begin{align}
    \nabla_{\paren{\X_\text{MLP}}_s}\mathcal{L}_s = \frac{1}{\size{\V[s]}}\cdot \paren{\bm{Y}_s-\overline{\bm{Y}}_s}
\end{align}

For the MLP layer, the gradients are computed by:

\begin{align}
    \nabla_{\W[MLP]}\mathcal{L}_s=&\paren{\X[L]_s}^\trans\cdot \func[mask]{\nabla_{\paren{\X_\text{MLP}}_s}\mathcal{L}_s}\\
    \nabla_{\X[L]_s}\mathcal{L}_s=&\func[mask]{\nabla_{\paren{\X_\text{MLP}}_s}\mathcal{L}_s\cdot \paren{\W[MLP]}^\trans}
\end{align}

For each graph convolution layer $\ell$, the gradients are computed by:

\begin{align}\label{eq: gcn backward}
    \nabla_{\Ws[\ell]}\mathcal{L}_s=& \paren{\X[\ell-1]_s}^\trans\cdot \func[mask]{\left[\nabla_{\X[\ell]_s}\mathcal{L}_s\right]_{:,0:\frac{1}{2}\f[\ell]}}\\
    \nabla_{\Wn[\ell]}\mathcal{L}_s=& \paren{\An_s\X[\ell-1]_s}^\trans\cdot \func[mask]{\left[\nabla_{\X[\ell]_s}\mathcal{L}_s\right]_{:,\frac{1}{2}\f[\ell]:\f[\ell]}}\\
    \nabla_{\X[\ell-1]_s}\mathcal{L}_s=& \func[mask]{\left[\nabla_{\X[\ell]_s}\mathcal{L}_s\right]_{:,0:\frac{1}{2}\f[\ell]}}\cdot \paren{\Ws[\ell]}^\trans\\
    +& \paren{\An_s}^\trans\cdot\func[mask]{\left[\nabla_{\X[\ell]_s}\mathcal{L}_s\right]_{:,\frac{1}{2}\f[\ell]:\f[\ell]}}\cdot \paren{\Wn[\ell]}^\trans
\end{align}

From the equations of forward and backward propagation, we observe that the GraphSAGE computation consists of three kernels: 
\begin{itemize}
    \item Feature / gradient propagation in the sparse subgraph -- e.g., $\An_s\X[\ell]_s$;
    \item Dense weight transformation on the feature / gradient -- e.g., $\X[\ell-1]_s\Ws[\ell]$;
    \item Sparse adjacency matrix transpose -- i.e., $\paren{\An_s}^\trans$.
\end{itemize}

{
In fact, the above three are also the key operations for GCN \citep{gcn}, MixHop \citep{mixhop} and GAT \citep{gat}. 
For GCN \citep{gcn}, the forward propagation only contains one pass as compared to the two paths in GraphSAGE (i.e., the two paths being concatenated by the ``$\|$'' operation). Therefore, in the backward propagation, we replace $\An[s]$ with $\hat{\bm{A}}_s$ and only keep the terms containing $\hat{\bm{A}}_s$ in Equation \ref{eq: gcn backward}. For example, we have $\nabla_{\X_s^{(\ell-1)}}\mathcal{L}_s=\paren{\hat{\bm{A}}}^\trans\cdot \func[mask]{\nabla_{\X_s^{(\ell)}}\mathcal{L}_s}\cdot \paren{\W^{(\ell)}}^\trans$.

For MixHop \citep{mixhop}, each layer in the forward propagation consists of $K$ paths as compared to the two paths in GraphSAGE. Therefore, we need to introduce the $\paren{\hat{\bm{A}}}^k$ terms (where $1\leq k \leq K$) to Equation \ref{eq: gcn backward} in the backward pass. For example, we need $\paren{\hat{\bm{A}}_s}^2\X[\ell-1]_s$ to compute $\nabla_{\W_2^{(\ell)}}\mathcal{L}_s$. 
Further note that $\paren{\hat{\bm{A}}_s}^2\X[\ell-1]_s= \hat{\bm{A}}_s\cdot \paren{\hat{\bm{A}}_s\X[\ell-1]_s}$. And even though $\bm{A}_s$ is sparse, the product $\hat{\bm{A}}_s\X[\ell-1]_s$ is again a dense matrix. So the forward and backward propagation for MixHop does not involve sparse-sparse matrix multiplication and the MixHop computation can still be covered by the three key operations listed above. 

For GAT \citep{gat}, in the forward pass, we need to compute the attention values for each element in the subgraph adjacency matrix. Such computation according to Equation \ref{eq: gat attention} only involves dense algebra. After obtaining the attention adjacency matrix, the rest of the propagation by Equation \ref{eq: gat forward} is the same as that of GCN. 
In the backward pass, according to chain rule, we can still break down the computation steps following the logic in the forward pass. For example, to obtain the gradient with respect to attention parameters $\bm{a}$, we first obtain the gradients with respect to the attention matrix $\bm{A}_{\text{att}}^{(\ell-1)}$ by a series of dense matrix operations on $\X[\ell-1]$, $\nabla_{\X[\ell]}\mathcal{L}_s$ and $\W$. Then we obtain the gradient with respect to $\bm{a}$ based on the gradient with respect to $\bm{A}_{\text{att}}^{(\ell-1)}$. Even though the mathematical expression for the GAT gradient computation is more complicated, it is easy to see that all the operations involved are again covered by the three key operations listed above. 

In summary, if we can efficiently parallelize the three operations listed above, we are automatically able to execute the full forward and backward propagation for the four GNNs. 
}
We present our method for transposing the sparse adjacency matrix in Section \ref{sec: adj trans} and the techniques for parallel feature propagation in Section \ref{sec: para feat prop}. 
Now consider the dense matrix multiplication involved in the weight transformation step. Since this operation is a standard BLAS level 2 routine, it can be efficiently parallelized using standard libraries such as Intel\textsuperscript{\small\textregistered}
 MKL \citep{MKL}. 

{In the following, we use $\An[s]$ to represent the subgraph adjacency matrix used in each GNN layer. For different models, the $\An[s]$ may be replaced by $\hat{\bm{A}}_s$ or $\bm{A}_{\text{att}}$. }

\subsection{Transpose of the Sparse Adjacency Matrix}
\label{sec: adj trans}

Since we assume the training graph and the sampled subgraphs are undirected, the transpose of the subgraph adjacency matrix $\paren{\An_s}^\trans$ can be performed efficiently with low computation and space complexity. We first discuss the serial implementation before moving forward to the parallel version. 

Suppose the original adjacency matrix $\An$ is represented in the CSR format, consisting of a size-$\size{\V[s]+1}$ index pointer array (\textsc{Indptr}), a size-$\size{\E[s]}$ indices array (\textsc{Indices}) and a size-$\size{\E[s]}$ data array (\textsc{Data}). 
For an undirected graph, if edge $\paren{u,v}\in \E[s]$, then $\paren{v,u}\in \E[s]$. 
Therefore, the index pointer and the indices arrays of $\An[s]$ are identical as the ones of $\paren{\An[s]}^\trans$. 
To transpose $\An[s]$ thus means to generate a new data array by permuting the original \textsc{Data} of the CSR of $\An_s$ . 

\begin{algorithm}
\caption{Transpose of the subgraph adjacency matrix}
\label{algo: adj trans}
\begin{algorithmic}[1]
\renewcommand{\algorithmicrequire}{\textbf{Input:}}
\renewcommand{\algorithmicensure}{\textbf{Output:}}
\Require Original adjacency matrix $\An[s]$ represented by the CSR format
\Ensure Transposed adjacency matrix $\paren{\An[s]}^\trans$ represented by the CSR format
\State $\textsc{Indptr}, \textsc{Indices}, \textsc{Data} \gets $ CSR arrays of $\An[s]$
\State $\textsc{DataTrans}\gets$ array of size $\size{\E[s]}$ initialized to $\func[INV]{}$
\State $\textsc{PtrData}\gets$ array of size $\size{\V[s]}$ initialized to $\textsc{Indptr}[:\size{\V[s]}]$
\For {$v$ from $0$ to $\size{\V[s]}-1$}
    \For {$j$ from $\textsc{Indptr}[v]$ to $\textsc{Indptr}[v+1]$}
        \State $u\gets \textsc{Indices}[j];\qquad a\gets \textsc{Data}[j];$
        \State $\textsc{DataTrans}[\textsc{PtrData}[u]]\gets a$
        \State Increment $\textsc{PtrData}[u]$ by $1$\color{blue}\Comment{Record the next position to append}\color{black}
    \EndFor
\EndFor
\State \Return Transposed matrix $\paren{\An[s]}^\trans$ from \textsc{Indptr}, \textsc{Indices}, $\textsc{DataTrans}$
\end{algorithmic} 
\end{algorithm}

We propose to generate the permuted data array for $\paren{\An_s}^\trans$ by a single pass of \textsc{Indptr} and \textsc{Indices} of $\An_s$. 
Our algorithm relies on a weak assumption on \textsc{Indices} of $\An[s]$: for any node $v$, we assume its neighbor IDs in the indices array, $\textsc{Indices}\left[\textsc{Indptr}[v]:\textsc{Indptr}[v+1]\right]$, is sorted in ascending order. 
The transpose operation is shown in Algorithm \ref{algo: adj trans}. 
The correctness of the algorithm can be reasoned as follows. 
Suppose a column $v$ of the original adjacency matrix has $n$ non-zeros denoted as $\left[\An[s]\right]_{u_i,v}=a_i$, where $1\leq i \leq n$ and the node IDs satisfy $u_i < u_j$ for $i<j$.
When we traverse the CSR of $\An[s]$ (lines 4 to 5), we will read $a_i$ before $a_j$ if the node IDs have $u_i < u_j$. 
After transpose, the neighbor data -- $a_1,\ldots, a_n$ -- should be placed in a continuous subarray $\textsc{Data}\left[\textsc{Indptr}[v]:\textsc{Indptr}[v+1]\right]$ of $\paren{\An[s]}^\trans$. 
In addition, $a_i$ should locate to the left of $a_j$ if $u_i < u_j$. 
Therefore, once reading $a_i$ of the edge $\paren{u_i,v}$ from $\An[s]$, we can simply append $a_i$ to $v$'s data subarray of the transposed CSR.

The computation and space complexity of Algorithm \ref{algo: adj trans} are $\mathcal{O}\paren{\size{\V[s]}+\size{\E[s]}}$ and $\mathcal{O}\paren{\size{\E[s]}}$ respectively, which are low compared with other operations in training. 
We parallelize the adjacency matrix transpose at the subgraph level. During sampling, each of the $p_\text{inter}$ processors sample one subgraph and permute the corresponding \textsc{Data} array by Algorithm \ref{algo: adj trans}. 
The information of the original and transposed subgraphs are all stored in the pool of $\Set{\G[i]}$ (Section \ref{sec: para sample inter}), to be later consumed by the GNN layer propagation. 

\subsection{Parallel Feature Propagation within Subgraph}
\label{sec: para feat prop}

During training, each node in the graph convolution layer $\ell$ pulls features from its neighbors, along the layer edges. Essentially, the operation of $\An\X[\ell-1]_s$ can be viewed as feature propagation within the subgraph $\G[s]$. 

A similar problem, label propagation within graphs, has been extensively studied in the literature. State-of-the-art methods based on
vertex-centric \citep{prop_blocking}, edge-centric \citep{xstream} and partition-centric \citep{gpop} paradigms perform node partitioning on graphs so that processors can work independently in parallel. The work in \citep{hidden_dim} also performs label partitioning along with graph partitioning when the label size is large. 
In our case, we borrow the above ideas to allow two dimensional partitioning along the graph as well as the feature dimensions. 
However, we also realize that the aforementioned techniques may lead to sub-optimal performance in our {GNN} based feature propagation, due to two reasons:

\begin{itemize}
    \item The propagated data from each node is a long feature vector (consisting of hundreds of elements) rather than a small scalar label. 
    \item Our graph sizes are small after graph sampling, so partitioning of the graph may not lead to significant advantage. 
\end{itemize}

In the following, we analyze the computation and communication costs of feature propagation after graph and feature partitioning. We temporarily ignore load-imbalance and partitioning overhead, and address them later on. 

Suppose we partition the subgraph into $Q_v$ number of disjoint node partitions $\Set{\V^{i}_s|0\leq i\leq Q_v-1}$. Let the set of nodes that send features to $\V^{i}_s$ be $\V_{\text{src}}^{i}=\Set{u|(u,v)\in\E[s]\land v\in \V^{i}_s}$. Note that $\V^{i}_s\subseteq\V_{\text{src}}^{i}$, since we follow the design in \citep{graphsage} to add a self-connection to each node. 
We further partition the feature vector $\bm{x}_v\in \mathbb{R}^f$ of each node $v$ into $Q_f$ equal parts $\Set{\bm{x}_v^{i}|0\leq i\leq Q_f-1}$. 
Each of the processors is responsible for propagation of $\X^{i,j}_s=\Set{\bm{x}_v^{j}|v\in \V_{\text{src}}^{i}}$, flowing from $\V_{\text{src}}^{i}$ into $\V^{i}$ (where $0\leq i\leq Q_v-1$ and $0\leq j\leq Q_f-1$).

Define $\gamma_{v}=\frac{\size{\V^{i}_\text{src}}}{\size{\V}}$ as a metric reflecting the graph partitioning quality. While $\gamma_{v}$ depends on the partitioning algorithm, it is always bound by $\frac{1}{Q_v}\leq \gamma_{v}\leq 1$.

Let $n=\size{\V[s]}$ and $f=\size{\bm{x}_v}$. So $\size{\V^{i}}=\frac{n}{Q_v}$ and $\size{\bm{x}_v^{i}}=\frac{f}{Q_f}$. 

In our performance model, we assume $p$ processors operating in parallel. Each processor is associated with a private fast memory (i.e., cache). The $p$ processors share a slow memory (i.e., DRAM). Our objective in partitioning is to minimize the overall processing time in the parallel system. After partitioning, each processor owns $\frac{Q_v\cdot Q_f}{p}$ number of $\X^{i,j}_s$, and propagates its $\X^{i,j}_s$ into $\V^{i}$. Due to the irregularity of graph edge connections, accesses into $\X^{i,j}_s$ are random. On the other hand, using the CSR format, the neighbor lists of nodes in $\V^{i}$ can be streamed into the processor, without the need to stay in cache. In summary, an optimal partitioning scheme should:
\begin{itemize}
    \item Let each $\X^{i,j}_s$ fit into the fast memory;
    \item Utilize all of the available parallelism in the system;
    \item Minimize the total computation workload;
    \item Minimize the total slow-to-fast memory traffic;
    \item Balance the computation and communication load among the processors.
\end{itemize}

Each round of feature propagation has $\frac{n}{Q_v}\cdot d\cdot \frac{f}{Q_f}$ computation, and $2\cdot \frac{n}{Q_v}\cdot d+8\cdot n\cdot \gamma_{v}\cdot \frac{f}{Q_f}$ communication (in bytes)\footnote{Given that sampled graphs are small, we use \texttt{INT16} to represent the node indices. We use \texttt{DOUBLE} to represent each feature value.}. Computation and computation over $Q_v\cdot Q_f$ rounds are:

\begin{align}
    g_\text{comp}(Q_v,Q_f)&=n\cdot d\cdot f\\
    g_\text{comm}(Q_v,Q_f)&=2\cdot Q_f\cdot n\cdot d+8\cdot Q_v\cdot n\cdot f\cdot \gamma_v
\end{align}

Note that $g_\text{comp}(Q_v,Q_f)$ is not affected by the partitioning scheme. We thus formulate the following \textit{communication minimization problem}:

\begin{align}\label{eqn:opt_part}
& \underset{Q_v,Q_f}{\text{minimize}}
& & g_\text{comm}(Q_v,Q_f)=2Q_f\cdot nd + 8Q_v\cdot nf\gamma_v \\
& \text{subject to}
& & Q_v Q_f\geq p;\quad \frac{8nf\gamma_v}{Q_f}\leq S_\text{cache};\quad Q_v,Q_f\in \mathbb{Z}^+;
\end{align}
Next, we prove that \emph{without any graph partitioning} we can obtain a 2-approximation for this optimization problem for small subgraphs.
\begin{thm}
\label{thm: label prop}
$Q_v=1, Q_f=\max\left\{p, \frac{8nf}{S_\text{cache}}\right\}$ results in a 2-approximation of the communication minimization problem (Equation \ref{eqn:opt_part}), for $p\leq \frac{4f}{d}$ and $2nd \leq S_\text{cache}$, irrespective of the partitioning algorithm.
\end{thm}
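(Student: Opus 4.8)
The plan is to produce a partitioning-independent lower bound on the optimal communication cost and then check that the proposed point $(Q_v,Q_f)=(1,Q_f^\ast)$, with $Q_f^\ast=\max\{p,\,8nf/S_\text{cache}\}$, is feasible and lies within a factor $2$ of that bound. First I would observe that choosing $Q_v=1$ pins $\gamma_v=1$, since the range $1/Q_v\le\gamma_v\le 1$ collapses to a point when $Q_v=1$. Feasibility of $(1,Q_f^\ast)$ is then immediate: $Q_vQ_f=Q_f^\ast\ge p$, and $8nf\gamma_v/Q_f=8nf/Q_f^\ast\le S_\text{cache}$, both by the definition of $Q_f^\ast$. (If $8nf/S_\text{cache}$ is not an integer one replaces $Q_f^\ast$ by its ceiling, which only tightens both constraints and perturbs the cost at lower order.) The cost of this point is $g_\text{comm}(1,Q_f^\ast)=2Q_f^\ast nd+8nf$.

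Next comes the lower bound, which is the conceptual heart of the argument. For an arbitrary partitioning algorithm and any feasible $(Q_v,Q_f)$, I would use only the facts $Q_f\ge 1$ and $\gamma_v\ge 1/Q_v$ to get
\[
g_\text{comm}(Q_v,Q_f)=2Q_f\,nd+8Q_v\,nf\,\gamma_v\;\ge\;2nd+8nf ,
\]
so $\mathrm{OPT}\ge 2nd+8nf$ regardless of how the graph is partitioned --- this is exactly what ``irrespective of the partitioning algorithm'' should mean here. Intuitively, the node-feature traffic of any scheme is at least $8nf$ (it cannot be smaller than if the cut were empty) and the index traffic is at least $2nd$ (one feature-slice worth), and the proposed un-partitioned solution already pays the $8nf$ term exactly; the only term it can inflate is $2Q_f^\ast nd$.

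It then remains to close the factor-$2$ gap by a short case split on $Q_f^\ast$. It suffices to show $2Q_f^\ast nd+8nf\le 2(2nd+8nf)$, i.e.\ $Q_f^\ast d\le 2d+4f$. If $Q_f^\ast=p$ this reads $pd\le 4f\le 2d+4f$, which is the hypothesis $p\le 4f/d$; if $Q_f^\ast=8nf/S_\text{cache}$ it reads $8ndf/S_\text{cache}\le 4f\le 2d+4f$, which is the hypothesis $2nd\le S_\text{cache}$. In either case $g_\text{comm}(1,Q_f^\ast)\le 2\,\mathrm{OPT}$, which is the claim.

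The step I expect to be the main obstacle is recognizing how weak a lower bound actually suffices: once one sees that merely dropping $Q_f$ to $1$ and using $\gamma_v\ge 1/Q_v$ already yields $\mathrm{OPT}\ge 2nd+8nf$, the remainder is bookkeeping, and the two hypotheses of the statement turn out to be precisely the inequalities needed to make $2Q_f^\ast nd$ comparable to $2nd+8nf$. A secondary point to handle carefully is the direction of the $\gamma_v$ inequality: the lower bound must invoke $\gamma_v\ge 1/Q_v$ (the regime most favourable to the optimum), whereas the proposed solution sits at the opposite extreme $\gamma_v=1$, and one must check that these two uses are consistent with quantifying over every partitioning algorithm.
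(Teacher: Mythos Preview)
Your proof is correct and follows essentially the same approach as the paper: a partitioning-independent lower bound via $\gamma_v\ge 1/Q_v$, feasibility of $(1,Q_f^\ast)$, and a two-case split on which term of the max realizes $Q_f^\ast$. The only cosmetic difference is that the paper drops the $2nd$ term in the lower bound and works with $\mathrm{OPT}\ge 8nf$ alone, then bounds the cost above by $16nf$; your slightly sharper lower bound $2nd+8nf$ makes the inequality $Q_f^\ast d\le 2d+4f$ marginally easier to close, but the hypotheses are invoked identically.
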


\begin{pf}




Note that since $Q_v, Q_f \geq  1$ and $\gamma_v \geq 1/Q_v$, $\forall Q_v, Q_f$:
\begin{align*}
    g_\text{comm}(Q_v, Q_f) \geq 2Q_f nd + 8Q_v nf\frac{1}{Q_v} \geq 8nf.
\end{align*}
Set $Q_v=1$ and $Q_f = \max\left\{p, \frac{8nf}{S_\text{cache}}\right\}$. Clearly, $\gamma_v = 1$.
\paragraph{Case 1, $p \geq \frac{8nf}{S_\text{cache}}$} In this case, $Q_f=p \geq 8nf/S_\text{cache}$. Thus both constraints are satisfied. And,
 \begin{align*}
 &g_\text{comm}\left(1, p\right) =  2ndp + 8nf \\&= 8nf\left(\frac{pd}{4f} + 1 \right)
 \leq  8nf\cdot (1+1) = 16nf
 \end{align*}
due to $p \leq 4f/d$.

\paragraph{Case 2, $p \leq \frac{8nf}{S_\text{cache}}$} In this case, $Q_f = 8nf/S_\text{cache}$ is a feasible solution. And,
\begin{align*}
     &g_\text{comm}\left(1, \frac{8nf}{S_\text{cache}}\right) = 2nd\frac{8nf}{S_\text{cache}} + 8nf \\
    &= 8nf\left(\frac{2nd}{S_\text{cache}} + 1\right) \leq 8nf\cdot (1+1) = 16nf
\end{align*}
due to $2nd\leq S_\text{cache}$.

In both cases, the approximation ratio of our solution is ensured to be: 

$$
\frac{g_\text{comm}\left(1, \max\left\{p, \frac{8nf}{S_\text{cache}}\right\}\right)}{\min g_\text{comm}(Q_v, Q_f)} \leq \frac{16nf}{8nf} = 2$$

Note that this holds for $S_\text{cache}\geq 2nd$. So for a cache size of 256KB, number of edges in the subgraph (i.e., $nd$) can be up to 128K. 
Such upper bound on $\size{\E[s]}$ can be met by the subgraphs in consideration. Also, since $f \gg d$, the condition $p \leq 4f/d$ holds for most of the shared memory platforms in the market. 
{Note that the above theorem is derived by a simple lower bounding on the ratio $\gamma_v$ for any (including the optimal) partitioning scheme. However, finding such optimal partitioning is computationally infeasible even on small subgraphs, since there are exponential number of possible partitioning. We thus do not provide experimental evaluation on this theorem. }

\end{pf}





Using typical  values $n\leq 8000$, $f=512$, and $d=15$, then for up to {$p\leq \frac{4f}{d}=136$ cores}\footnote{Note that $d$ here refers to the average degree of the sampled graph rather than the training graph. Thus, $d$ value here is set to be lower than that in Section \ref{sec: para sample}.}, the total slow-to-fast memory traffic under feature only partitioning is less than 2 times the optimal. 
Recall the two properties (see the beginning of this section) that differentiate our case with the traditional label propagation. Because the graph size $n$ is small enough, we can find a feasible $Q_f\in\mathbb{Z}^+$ solution to satisfy the cache constraint $\frac{8nf}{Q_f}\leq S_\text{cache}$. Because the value $f$ is large enough, we can find enough number of feature partitions such that $Q_f\geq p$. 
Algorithm \ref{algo: gsaint training feat prop} specifies our feature propagation.

\begin{algorithm}
\caption{Feature propagation within sampled graph}
\label{algo: gsaint training feat prop}
\begin{algorithmic}[1]
\renewcommand{\algorithmicrequire}{\textbf{Input:}}
\renewcommand{\algorithmicensure}{\textbf{Output:}}
\Require Subgraph $\G[s]\paren{\V[s],\E[s]}$ with adjacency matrix $\An[s]$; Node feature matrix $\X[\ell-1]_s$; Cache size $S_\text{cache}$; Number of processors $p$
\Ensure Feature matrix $\X[\ell]_s$
\State $n\gets \size{\V[s]};\qquad f\gets \text{length of the feature vector of a node};$
\State $Q_f\gets \max\left\{p,\frac{8nf}{S_\text{cache}}\right\};\qquad f'\gets f/Q_f;$
\State Column-partition $\X[\ell-1]_s$ into $Q_f$ equal-size parts $\left[\X_{s}^{\paren{\ell-1}}\right]_{:,\,i\cdot f':(i+1)\cdot f'}$
\For{$r=0$ to $Q_f/p-1$}
    \ParFor{$j=0$ to $p-1$}
        \State $i\gets r+j\cdot Q_f/p$
        \State $\left[\X[\ell]_s\right]_{:,\,i\cdot f':(i+1)\cdot f'}\gets \An[s]\cdot \left[\X_s^{\paren{\ell-1}}\right]_{:,\,i\cdot f':(i+1)\cdot f'}$
    \EndParFor
\EndFor
\State \Return $\X[\ell]_s$
\end{algorithmic} 
\end{algorithm}

Lastly, the feature only partitioning leads to two more important benefits. Since the graph is not partitioned, load-balancing (with respect to both computation and communication) is optimal across processors. Also, our partitioning incurs almost zero pre-processing overhead since we only need to extract continuous columns to form sub-matrices. In summary, the feature propagation in our graph sampling-based training achieves 
\begin{enumerate*}
    \item Minimal computation;
    \item Optimal load-balancing;
    \item Zero pre-processing cost;
    \item Low communication volume.
\end{enumerate*}

\section{Runtime Scheduling}
\label{sec: scheduler}

\subsection{Computation Order of Layer Operations}
\label{sec: scheduler ordering}
Both the forward and backward propagation of {GNN} layers (Equations \ref{eq: gcn forward}, \ref{eq: graphsage forward}, \ref{eq: mixhop forward}, \ref{eq: gat forward} and \ref{eq: gcn backward}) involve multiplying a chain of three matrices. 
{Given a chain of matrix multiplication, it is known that different orders of computing the chain leads to different computation complexity. In general, we can use dynamic programming techniques to obtain the optimal order corresponding to the lowest computation complexity \citep{mcm}. Specifically, for our training problem, we have a chain of three matrices whose sizes and densities are known once the subgraphs are sampled. }
Consider a sparse matrix $\bm{A}\in\mathbb{R}^{n\times n}$ (with density $\delta$), and two dense matrices $\bm{W}_1\in \mathbb{R}^{n\times f_1}$ and $\bm{W}_2\in\mathbb{R}^{f_1\times f_2}$. 
To calculate $\bm{A}\bm{W}_1\bm{W}_2$, {there are two possible computation orders. Order 1 of $\paren{\bm{A}\bm{W}_1}\bm{W}_2$ computes the partial result $\bm{P}=\bm{A}\W_1$ first and then computes $\bm{P}\W[2]$}. This order of computation requires $\delta\cdot n^2 \cdot f_1+n\cdot f_1\cdot f_2$ {Multiply-ACcumulate (MAC)} operations. {Order 2 of $\bm{A} \paren{\bm{W}_1 \bm{W}_2}$ computes the partial result $\bm{P}=\W[1]\W[2]$ first and then computes $\bm{A}\bm{P}$.} This order requires $\delta \cdot n^2 \cdot f_2 + n\cdot f_1\cdot f_2$ MAC operations. Therefore, if $f_1<f_2$, order 1 is better. Otherwise, we should use order 2.  
Similarly, suppose $\bm{W}_3\in\mathbb{R}^{n\times f_3}$ and our target is $\paren{\bm{W}_1}^\trans\bm{A}\bm{W}_3$. Then order 1 of $\paren{\bm{A}\bm{W}_1}^\trans\bm{W}_3$ is better than order 2 of 
$\paren{\bm{W}_1}^\trans\paren{\bm{A}\bm{W}_3}$ if and only if $f_1< f_3$. 

Consider a GraphSAGE layer $\ell$. If $\f[\ell-1]<\f[\ell]$, we should use order 1 to calculate the forward propagation of Equation \ref{eq: gcn forward}, order 1 to calculate $\nabla_{\Ws[\ell]}\mathcal{L}_s$  of Equation \ref{eq: gcn backward} and order 2 to calculate $\nabla_{\X[\ell-1]_s}\mathcal{L}_s$ of Equation \ref{eq: gcn backward}. 

Note that the decision of the scheduler only relies on the dimension of the matrices, and thus can be made during runtime at almost no cost. In addition, the partitioning strategy presented in Section \ref{sec: para feat prop} does not rely on any specific computation order. In summary, the light-weight scheduling algorithm reduces computation complexity without affecting scalability. 

\subsection{Scheduling the Feature Partitions}
\label{sec: schedule clipping}

After partitioning the feature matrix (Section \ref{sec: para feat prop}), the question still remains how to schedule these partitions for further performance optimization. 
Ideally, since the operations on the partitions are completely independent, any scheduling would lead to identical performance. 
However, in reality, the partitions may undesirably interact with each other due to ``false sharing'' of data in private caches. 
If the size of each feature partition is not divisible by the cacheline size, then in the private cache of the processor owning partition $i$, there may be one cachline containing data of both partitions $i$ and $i+1$, and another cacheline containing data of both partitions $i-1$ and $i$. 
Therefore, if the partitions $i-1$, $i$ and $i+1$ are computed concurrently, there may be undesirable data eviction to keep the three caches clean. 
So the scheduler should try not to dispatch adjacent partitions at the same time, and we follow the processing order as specified by lines 5 and 6 of Algorithm \ref{algo: gsaint training feat prop} to achieve this goal.

When the number of processors is large or the number of feature partitions is small (i.e., line 4 of Algorithm \ref{algo: gsaint training feat prop} finishes in one iteration), it is inevitable to process adjacent partitions in parallel. 
On the other hand, note that if the partition size is divisible by the cacheline size, we can avoid ``false sharing'' regardless of the scheduling. The partition size equals $w\cdot \size{\V[s]}\cdot f/Q_f$, where $w$ specifies the word-length. 
Suppose the cacheline size is $S_\text{cline}$. Then our goal is to make $\size{\V[s]}$ divisible by $S_\text{cline}/w$. 
For example, if we use double-precision floating point numbers in training and the cacheline size is 128 bytes, then we can clip the number of subgraph nodes to be divisible by $16$. Considering that $\size{\V[s]}$ is in the order of $10^3$, such clipping has negligible effect on the subgraph connectivity and the training accuracy. 
The node clipping can be performed before the induction step (line 9 of Algorithm \ref{algo: frontier sampling}) by randomly dropping nodes in $\V[s]$. Therefore, the clipping step incurs almost zero cost.

\subsection{Overall Scheduler}

\begin{algorithm}
\caption{Runtime scheduling (with Frontier  sampling as an example)}
\label{algo: gsaint scheduler}
\begin{algorithmic}[1]
\renewcommand{\algorithmicrequire}{\textbf{Input:}}
\renewcommand{\algorithmicensure}{\textbf{Output:}}
\Require Training graph $\G\paren{\V,\E,\X}$; Ground truth labels $\overline{\bm{Y}}$; $L$-layer GNN model; Sampler parameters $m,n,\eta$; Parallelization parameters $p_{\text{inter}}, p_{\text{intra}}$
\Ensure Trained GNN weights
\State $\Set{\G_i}\gets \emptyset$\color{blue}\Comment{Set of unused subgraphs}\color{black}
\While{not terminate}\color{blue}\Comment{Iterate over minibatches}\color{black}
    \If{$\Set{\G_i}$ is empty}
        \ParFor{$p=0$ to $p_{\text{inter}}-1$}
            \State  $\G[s]\gets\func[SAMPLE]{\G\paren{\V,\E}}$ with $p_\text{intra}$; Clip nodes by cacheline size
            \State Transpose $\G[s]$ by permuting the \textsc{Data} array of the CSR
            \State Add $\G[s]$ and its transposed array \textsc{Data} to the pool $\Set{\G[i]}$
        \EndParFor
    \EndIf
    \State $\G[s]\gets $ Subgraph popped out from $\Set{\G_i}$
    \State Construct GNN on $\G[s]$
    \State {Determine the order of matrix chain multiplication by Section \ref{sec: scheduler ordering}}
    \State Parallel forward and backward propagation on GNN
\EndWhile
\State \Return Trained GNN weights
\end{algorithmic} 
\end{algorithm}

Algorithm \ref{algo: gsaint scheduler} presents the overall training scheduler. As discussed in Section \ref{sec: para sample inter}, multiple samplers can be launched in parallel without any data dependency. This is shown by lines 4 to 8. Note that the clipping follows the objective specified in Section \ref{sec: schedule clipping} and the transpose of $\G[s]$ follows Algorithm \ref{algo: adj trans}. 
After the GNN is constructed, the forward and backward propagation operations are parallelized by the techniques presented in Section \ref{sec: para spmm}. 
{The scheduler performs two decisions based on the sampled subgraphs. The first decision (during runtime) is to perform node clipping to improve cache performance (Section \ref{sec: schedule clipping}). The second decision (statically performed before the actual training) is to determine the order of matrix chain multiplication in both forward and backward propagation to reduce computation complexity (Section \ref{sec: scheduler ordering}). }

Note that our scheduler is a general one, in the sense that the training can replace the frontier sampler with any other graph sampling algorithm in a plug-and-play fashion. The processing by the scheduler has negligible overhead. 

\section{Experiments}
\label{sec: exp}

\subsection{Experimental Setup}
\label{sec: exp setup}
We conduct experiments on 4 large scale real-world graphs as well as on synthetic graphs. Details of the datasets are described as follows:
\begin{itemize}
    \item PPI \citep{ppi_reddit}: A protein-protein interaction graph. A node represents a protein and edges represent protein interactions. 
    \item Reddit \citep{ppi_reddit}:  A post-post graph. A node represents a post. An edge exists between two posts if the same user has commented on both posts.
    \item Yelp \citep{yelp, graphsaint}: A social network graph. A node is a user. An edge represents friendship. Node attributes are user comments converted from text using Word2Vec \citep{word2vec}.
    \item Amazon \citep{graphsaint}: An item-item graph. A node is a product sold by Amazon. An edge is present if two items are bought by the same customer. Node attributes are converted from bag-of-words of text item descriptions using singular value decomposition (SVD).
    \item Synthetic graphs: Graphs generated by Kronecker generator \citep{kronecker}. We follow the setup in \citep{kronecker} and set the initiator matrices to be proportional to the 2 by 2 matrix $[[0.9, 0.5], [0.5, 0.1]]$. We generate two sets of Kronecker graphs. The first set consists of graphs with fixed average degree of 16 and number of nodes equals to $2^{20}$, $2^{21}$, $2^{22}$, $2^{23}$, $2^{24}$ and $2^{25}$. The second set consists of graphs with $2^{20}$ nodes and the average degree equals to $8$, $16$, $32$, $64$, $128$, $256$ and $512$.
    \end{itemize}
The PPI and Reddit datasets are standard benchmarks used in \citep{gcn,graphsage,sgcn,asgcn,fastgcn}. 
The larger scale graphs, Yelp and Amazon, are processed and evaluated in \citep{ipdps,graphsaint}. We use the set of four real-world graphs for a thorough evaluation on accuracy, efficiency and scalability. Table \ref{tab:dataset} shows the specification of the graphs. We use ``fixed-partition'' split, and the ``Train/Val/Test'' column shows the percentage of nodes in the training, validation and test sets. ``Classes'' shows the total number of node classes (i.e., number of columns of $\bm{Y}$ and $\overline{\bm{Y}}$ in Equation \ref{eq: loss forward}). 
For synthetic graphs, we can only generate the graph topology. The node attributes and the class memberships are filled by random numbers. 



\begin{table}[!ht]
\caption{Dataset Statistics}
    \centering
    \resizebox{\columnwidth}{!}{
    \begin{tabular}{rccccc}
        \toprule
        Dataset & Nodes & Edges & Attributes & Classes & Train/Val/Test\\
        \midrule
        \midrule
        PPI & 14,755 & 225,270 & 50 & 121 (M) & 0.66/0.12/0.22\\
        Reddit & 232,965 & 11,606,919 & 602 & 41 (S) & 0.66/0.10/0.24\\
        Yelp & 716,847 & 6,977,410 & 300 & 100 (M) & 0.75/0.15/0.10\\
        Amazon & 1,598,960 & 132,169,734 & 200 & 107 (M) & 0.80/0.05/0.15\\
        \midrule
        Synthetic & $2^{20}$ - $2^{25}$ & $2^{23}$ - $2^{30}$ & 50 & 2 (S) & 0.50/0.25/0.25\\
        \bottomrule
    \end{tabular}}
    \\
    \vspace{.2cm}
    {\scriptsize{}{*} The (M) mark stands for \textbf{M}ulti-class classification, while (S) stands for \textbf{S}ingle-class.}{\scriptsize \par}
    \label{tab:dataset}
\end{table}

For our graph sampling based GNN training, we open-source two implementations in \verb|Python| (with Tensorflow) and \verb|C++| (with OpenMP), respectively\footnote{Code available at: \texttt{https://github.com/GraphSAINT/GraphSAINT}}. We use the \verb|Python| (Tensorflow) version for single threaded accuracy evaluation in Section \ref{sec: exp acc}, since the baseline implementations are provided in \verb|Python| with Tensorflow. We use the \verb|C++| version to measure scalability of our parallel training in Sections \ref{sec: exp scale}, \ref{sec: exp cache} and \ref{sec: exp deep}. The \verb|C++| implementation is necessary, since \verb|Python| and Tensorflow are not flexible enough for parallel computing experiments (e.g., AVX and thread binding are not explicit in \verb|Python|). Our \verb|C++| implementation achieves comparable accuracy as the Tensorflow one. 

We run experiments on a dual 20-Core Intel\textsuperscript{\small\textregistered} Xeon E5-2698 v4 @2.2GHz machine with 512GB of DDR4 RAM. For the \texttt{Python} implementation, we use \texttt{Python} 3.6.5 with Tensorflow 1.10.0. For the \texttt{C++} implementation, the compilation is via Intel\textsuperscript{\small\textregistered} ICC (\verb|-O3| flag). ICC (version 19.0.5.281), MKL (version 2019 Update 5) and OMP are included in Intel Parallel Studio Xe 2018 update 3. 




\subsection{Evaluation on Accuracy and Efficiency}
\label{sec: exp acc}

Our graph sampling-based training significantly reduces computation complexity without accuracy loss. To eliminate the impact of different parallelization strategies on training time, here we run our implementation as well as all the baselines using single thread.  
Figure \ref{fig: exp cap 1}
plots the relation between accuracy (F1 micro score) and sequential training time. To be consistent with the settings in the original papers of the baselines, all measurements here are based on the GNN models of two GCN / GraphSAGE layers. Accuracy is measured on the validation set at the end of each epoch. Between the two baselines \citep{gcn,graphsage}, GraphSAGE \citep{graphsage} achieves higher accuracy and faster convergence. Compared with \citep{graphsage}, our minibatch training achieves higher accuracy on all graphs, {showing that our graph sampler can preserve important characteristics from the original training graph. }
{Frontier, random walk and edge sampling algorithms perform similarly on Reddit, Yelp and Amazon. On PPI, random walk and edge sampling algorithms result in lower accuracy than the frontier sampler. This is potentially due to the fact that frontier sampler preserves some graph measures better than simpler samplers such as Edge and RW \citep{frontier}. }
Due to the stochasticity in training, we define an accuracy threshold to measure training time speedup. Let $a_0$ be the highest accuracy achieved by the baselines on a given dataset. We define the accuracy threshold as $a_0-0.0025$.
Serial training time speedup is calculated as: the time for the best performing baseline to reach the threshold divided by the time for our model to reach the threshold. 
We achieve serial training time speedup of $1.9\times$, $7.8\times$, $4.7\times$ and $2.1\times$ for PPI, Reddit, Yelp and Amazon, respectively. 
{As stated in Section \ref{sec: exp setup}, in this set of experiments, all the runs are executed under the same Tensorflow framework using single thread. Therefore, the speedup achieved by us is not related to our parallelization strategies and is purely due to our graph sampling based training algorithm. }
Such significant speedup verifies that our minibatch training improves the computation efficiency by avoiding ``neighbor explosion'' (see Section \ref{sec: gcn efficiency}).





\subsection{Evaluation on Scalability}
\label{sec: exp scale}

In the following, we evaluate scalability of the various operations (graph sampling, feature propagation and weight transformation) in training. 

\subsubsection{Scalability of Overall Training}

\begin{figure*}[htp]
\input{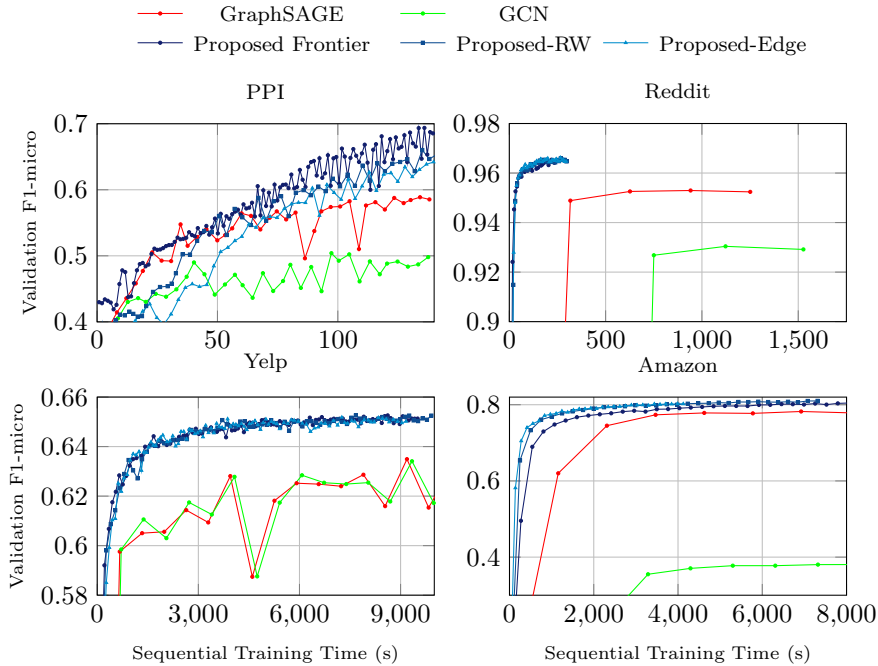}
\caption{Time-Accuracy plot for sequential execution}
\label{fig: exp cap 1}
\vspace{-0.2cm}
\end{figure*}

For the proposed GNN training, Figure \ref{fig: exp scale} shows the parallel training speedup relative to sequential execution. The execution time includes every training steps specified by lines 2 to 13 of Algorithm \ref{algo: gsaint scheduler} --- 
\begin{enumerate*}
\item {frontier} graph sampling (with AVX enabled) and subgraph transpose, 
\item feature aggregation in the forward propagation and its corresponding operation in the backward propagation,
\item weight transformation in the forward propagation and its corresponding operation in the backward propagation, and
\item all the other operations (e.g., $\func[ReLU]{}$ activation, sigmoid function, etc. ) in the forward and backward propagation
\end{enumerate*}. 
As before, we evaluate scaling on a 2-layer GraphSAGE model, with small and large hidden dimensions, $f^{(1)}=f^{(2)}=512$ and $1024$, respectively. 
As shown by the plots A and D of Figure \ref{fig: exp scale}, the overall training is highly scalable, consistently achieving around $15\times$ speedup on 40-cores for all datasets. The performance breakdown in plots G and H of Figure~\ref{fig: exp scale} suggests that
sampling time corresponds to only a small portion of the total training time. This is due to
\begin{enumerate*}
    \item low serial complexity of our Dashboard based implementation, and
    \item highly scalable implementation using intra- and inter-sampler parallelism.
\end{enumerate*}
In addition, feature aggregation for Amazon corresponds to a significantly higher portion of the total time compared with other datasets. This is due to the higher degree of the subgraphs sampled from Amazon. 
The main bottleneck in scaling is the weight transformation step performing dense matrix multiplication (see analysis in Section \ref{sec: exp weight transform}). 
The overall  performance scaling is also data dependent. 
For denser graphs such as Amazon, the scaling of the feature aggregation step dominates the overall scalability. For the other sparser graphs, the weight transformation step has a higher impact on the training. 
Lastly, our parallel algorithm can scale well under a wide range of configurations --- whether the hidden dimension is small or large; whether the training graph is small or large, sparse or dense.  

\begin{figure*}[htp]
\centering
\input{}
\caption{Scaling evaluation with hidden feature dimensions: $512$  and $1024$}
\label{fig: exp scale}
\end{figure*}

\subsubsection{Scalability of Parallel Graph Sampling}

{We evaluate the effect of inter-sampler parallelism for the frontier, random walk and edge sampling algorithms, and intra-sampler parallelism for the frontier sampling algorithm.}

{For the frontier sampling algorithm, the} AVX2 instructions supported by our target platform translate to maximum of 8 intra-subgraph parallelism ($p_\text{intra}=8$). The total of 40 Xeon cores makes $1\leq p_\text{inter}\leq 40$. Figure \ref{fg:noavx_vs_avx}.A shows the effect of $p_\text{inter}$, when $p_\text{intra}=8$ (i.e., we launch $1\leq p_\text{inter}\leq 40$ independent samplers, where AVX is enabled within each sampler). Sampling is highly scalable with inter-subgraph parallelism. We observe that scaling performance degrades when going from 20 to 40 cores, {due to mixed effect of lower boost frequency and limited memory bandwidth. With all the 20 cores in one chip executing AVX2 instructions, the Xeon CPU can only boost to 2.2GHz, in contrast with 3.4GHz for executing AVX instructions only on one core. }
Figure \ref{fg:noavx_vs_avx}.B shows the effect of $p_\text{intra}$ under various $p_\text{inter}$. The bars show the speedup of using AVX instructions comparing with otherwise. We achieve around $4\times$ speedup on average. The scaling on $p_\text{intra}$ is data dependent. Depending on the training graph degree distribution, there may be significant portion of nodes with less than 8 neighbors, resulting in under-utilization of the AVX2 instruction. {We can understand such under-utilization of instruction-level parallelism as a result of load-imbalance due to node degree variation. Such load-imbalance explains the discrepancy from the theoretical modeling on the sampling scalability (Theorem \ref{thm: frontier scale})}.  

{Figure \ref{fg:noavx_vs_avx}.C and \ref{fg:noavx_vs_avx}.D show the effect of $p_\text{inter}$ for random walk and edge sampling algorithms. Both sampling algorithms scale more than 20$\times$ when $p_\text{inter}=40$. As we do not use AVX instructions for thse two samplers (i.e., $p_\text{intra}=1$, and the CPU frequency is unaffected), the scalability from 20 cores to 40 cores is better than that of the frontier sampler. }

\begin{figure}
\centering
\begin{tikzpicture}

    \pgfplotsset{compat=newest,scaled x ticks=false}

    \def\widthplot{0.5\textwidth}
    \def\titlefontsize{\footnotesize}
    \def\axisfontsize{\scriptsize}
    \def\labelfontsize{\footnotesize}

    \begin{groupplot}[group style={group size= 2 by 2,horizontal sep=0.15\textwidth, vertical sep=0.15\textwidth},height=0.4\textwidth]
        \nextgroupplot[ylabel={\axisfontsize Speedup},xlabel={\axisfontsize $p_\text{inter}$},title={\titlefontsize A. Frontier Sampling Speedup ($p_\text{intra}=8)$},width=\widthplot,grid,ymax=30]
            \addplot[mark=x,color=blue]
                coordinates{( 1 , 1.0 )( 5 , 4.925 )( 10 , 6.840277777777778 )( 20 , 12.429022082018927 )( 40 , 15.6039603960396 )};
            \label{ppi}
            \addplot[mark=x,color=green]
                coordinates{( 1 , 1.0 )( 5 , 4.696969696969696 )( 10 , 7.848101265822785 )( 20 , 12.19672131147541 )( 40 , 15.152749490835031 )};
            \label{reddit}
            \addplot[mark=x,color=red]
                coordinates{( 1 , 1.0 )( 5 , 4.481865284974093 )( 10 , 8.277511961722489 )( 20 , 11.849315068493151 )( 40 , 14.151329243353782 )};
            \label{yelp}
            \addplot[mark=x,color=cyan]
                coordinates{( 1 , 1.0 )( 5 , 4.642857142857143 )( 10 , 8.235294117647058 )( 20 , 11.704180064308682 )( 40 , 16.14190687361419 )};
            \label{amazon}
            \coordinate (top) at (rel axis cs:0,1);
        \nextgroupplot[ybar=0cm,symbolic x coords={1,5,10,20,40},enlarge x limits=true,xtick=data,ylabel={\axisfontsize Speedup},xlabel={\axisfontsize $p_\text{inter}$},title={\titlefontsize B. Frontier Sampling AVX Speedup},width=\widthplot,xtick align=inside,ymajorgrids,ymin=0,ymax=6,bar width=0.18cm]
            \addplot[draw=none,fill=blue]
                coordinates{(1, 4.441624365482234 )(5, 4.63 )(10, 5.458333333333334 )(20, 5.482649842271294 )(40, 3.948514851485148 )};
            \label{ppiavx}
            \addplot[draw=none,fill=green]
                coordinates{(1,4.129032258064516)(5,4.242424242424242)(10,4.1645569620253164)(20,3.488524590163934)(40,3.782077393075357)};
            \label{redditavx}
            \addplot[draw=none,fill=red]
                coordinates{(1,3.913294797687861)(5,3.8186528497409324)(10,4.1913875598086126)(20,4.6506849315068495)(40,3.353783231083845)};
            \label{yelpavx}
            \addplot[draw=none,fill=cyan]
                coordinates{(1,3.4505494505494503)(5,3.5408163265306127)(10,3.701357466063348)(20,4.032154340836013)(40,3.35920177383592)};]
            \label{amazonavx}
            \coordinate (bot) at (rel axis cs:1,0);
        \nextgroupplot[ylabel={\axisfontsize Speedup},xlabel={\axisfontsize $p_\text{inter}$},title={\titlefontsize A. Random Walk Sampling Speedup},width=\widthplot,grid,ymax=30]
            \addplot[mark=x,color=blue]
                coordinates{( 1 , 1.0 )( 5 , 4.32 )( 10 , 7.43 )( 20 , 14.54 )( 40 , 21.55 )};
            \label{ppi}
            \addplot[mark=x,color=green]
                coordinates{( 1 , 1.0 )( 5 , 4.53 )( 10 , 7.75 )( 20 , 15.68 )( 40 , 24.05 )};
            \label{reddit}
            \addplot[mark=x,color=red]
                coordinates{( 1 , 1.0 )( 5 , 4.29 )( 10 , 7.41 )( 20 , 14.46 )( 40 , 21.92 )};
            \label{yelp}
            \addplot[mark=x,color=cyan]
                coordinates{( 1 , 1.0 )( 5 , 4.43 )( 10 , 7.02 )( 20 , 13.91 )( 40 , 24.40 )};
            \label{amazon}
        \nextgroupplot[ylabel={\axisfontsize Speedup},xlabel={\axisfontsize $p_\text{inter}$},title={\titlefontsize A. Edge Sampling Speedup},width=\widthplot,grid,ymax=30]
            \addplot[mark=x,color=blue]
                coordinates{( 1 , 1.0 )( 5 , 4.40 )( 10 , 7.45 )( 20 , 14.84 )( 40 , 22.05 )};
            \label{ppi}
            \addplot[mark=x,color=green]
                coordinates{( 1 , 1.0 )( 5 , 4.34 )( 10 , 7.51 )( 20 , 15.11 )( 40 , 22.57 )};
            \label{reddit}
            \addplot[mark=x,color=red]
                coordinates{( 1 , 1.0 )( 5 , 4.00 )( 10 , 7.14 )( 20 , 14.15 )( 40 , 20.17 )};
            \label{yelp}
            \addplot[mark=x,color=cyan]
                coordinates{( 1 , 1.0 )( 5 , 4.40 )( 10 , 7.55 )( 20 , 14.98 )( 40 , 23.74 )};
            \label{amazon}
    \end{groupplot}
    \path (top|-current bounding box.south)--
        coordinate(legendpos)
        (bot|-current bounding box.south);
    \matrix[matrix of nodes,anchor=south,inner sep=0.2em,draw=none] at ([yshift=-3.5ex]legendpos)
    {
        \ref{ppi} \ref{ppiavx} & \labelfontsize PPI & [5pt]
        \ref{reddit} \ref{redditavx} & \labelfontsize Reddit & [5pt]
        \ref{yelp} \ref{yelpavx} & \labelfontsize Yelp & [5pt]
        \ref{amazon} \ref{amazonavx} & \labelfontsize Amazon \\
    };
\end{tikzpicture}
\caption{Sampling speedup (inter- \& intra-subgraph parallelism)}
\label{fg:noavx_vs_avx}
\end{figure}
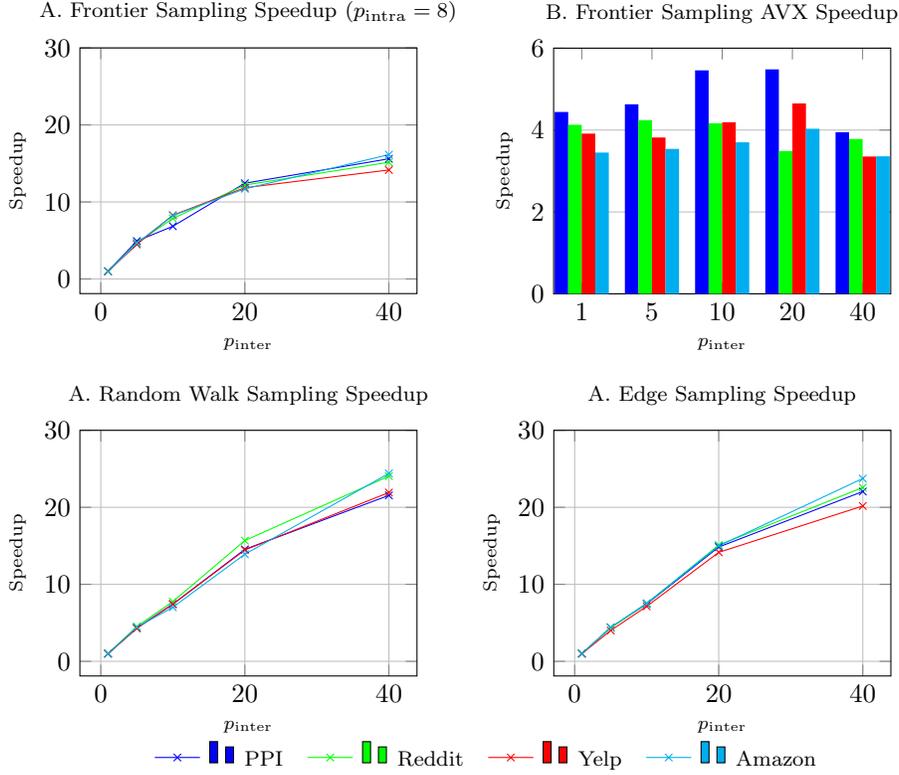


\subsubsection{Scalability of Feature Aggregation}

Figure \ref{fig: exp scale} shows the scalability of the feature aggregation step using our partitioning strategy. We achieve good scalability (around $20\times$ speedup on 40 cores) for all datasets under various feature sizes, thanks to our caching strategy and the optimal load-balance discussed in Section \ref{sec: para feat prop}. 
According to the analysis, the scalability of feature aggregation should not be significantly affected by the subgraph topological characteristics. Therefore, we observe from plots B and E of Figure \ref{fig: exp scale} that, the curves for the four datasets look similar to each other. 



\subsubsection{Scaling of Weight Transformation}
\label{sec: exp weight transform}
As discussed in Section \ref{sec: para feat prop kernel}, the weight transformation operation is implemented by \texttt{cblas\_dgemm} routine of the Intel\textsuperscript{\small\textregistered} MKL \citep{MKL} library. All optimizations on the dense matrix multiplication are internally implemented in the library. Plots C and F of Figure \ref{fig: exp scale} show the scalability result. 
On 40 cores, average of $13\times$ speedup is achieved. We speculate that the overhead of MKL's internal thread and buffer management is the bottleneck on further scaling.

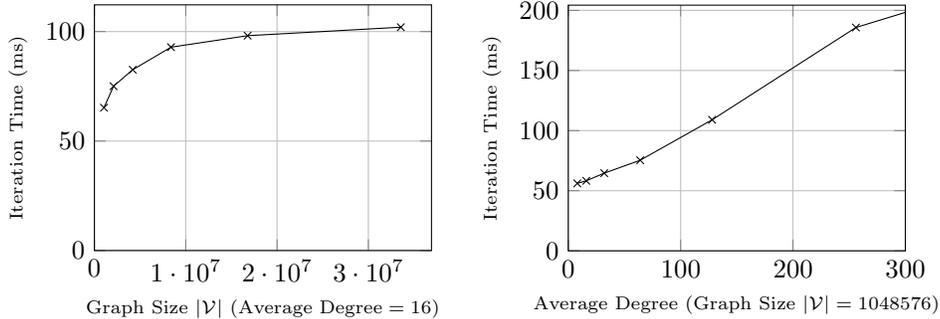
\begin{figure}
	\centering
	\begin{tikzpicture}

    \pgfplotsset{compat=newest,scaled x ticks=false}

    \def\widthplot{0.5\textwidth}
    \def\titlefontsize{\footnotesize}
    \def\axisfontsize{\scriptsize}
    \def\labelfontsize{\footnotesize}

    \begin{groupplot}[group style={group size= 2 by 1,horizontal sep=0.15\textwidth},grid,height=0.4\textwidth]
        \nextgroupplot[ylabel={\axisfontsize Iteration Time (ms)},xlabel={\axisfontsize Graph Size $\size{\V}$ ($\text{Average Degree}=16$)},width=\widthplot,xmin=0,ymin=0]
            \addplot[mark=x]
                coordinates{(1048576,65.25)(2097152,75.05)(4194304,82.575)(8388608,92.875)(16777216,98.125)(33554432,102)};
        \nextgroupplot[ylabel={\axisfontsize Iteration Time (ms)},xlabel={\axisfontsize Average Degree (Graph Size $\size{\V}=1048576$)},width=\widthplot,xmin=0,xmax=300,ymin=0]
            \addplot[mark=x]
                coordinates{(8,56.1)(16,58.25)(32,64.6)(64,75.35)(128,108.975)(256,185.75)(512,258.85)(1024,402.8)};
    \end{groupplot}

\end{tikzpicture}
	\caption{Training Time in Synthetic Graph}
	\label{fig: exp synthetic}
\end{figure}



\subsection{Effect of Cache Size}
\label{sec: exp cache}

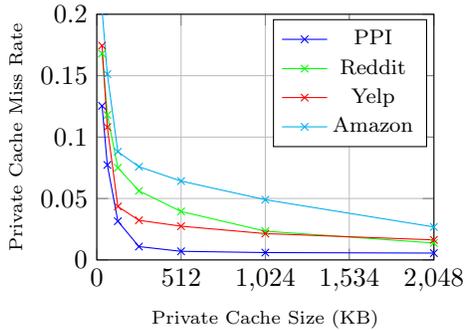
\begin{figure}
	\centering
	\begin{tikzpicture}

    \pgfplotsset{compat=newest,scaled x ticks=false,scaled y ticks=false}

    \def\widthplot{0.5\textwidth}
    \def\titlefontsize{\footnotesize}
    \def\axisfontsize{\scriptsize}
    \def\labelfontsize{\footnotesize}

    \begin{axis}[xlabel={\axisfontsize Private Cache Size (KB)},ylabel={\axisfontsize  Private Cache Miss Rate},grid=major,height=0.8*\widthplot,width=\widthplot,legend style={font=\labelfontsize},xmin=0,xmax=2048,ymin=0,ymax=0.2,xtick={0,512,1024,1534,2048},ytick={0,0.05,0.1,0.15,0.2},yticklabel style={/pgf/number format/fixed}]
        \addplot[color=blue,mark=x]
            coordinates{( 32 , 0.12542799103367672 )( 64 , 0.07731177377626071 )( 128 , 0.031498005752695145 )( 256 , 0.010858771161916076 )( 512 , 0.007016346014113375 )( 1024 , 0.0059781800385870585 )( 2048 , 0.005571232739299103 )};
            \addlegendentry{\labelfontsize PPI}
        \addplot[color=green,mark=x]
            coordinates{( 32 , 0.16799870045017967 )( 64 , 0.11796867399757792 )( 128 , 0.07504750401877427 )( 256 , 0.05622569086193041 )( 512 , 0.039444809448541125 )( 1024 , 0.023401644578458326 )( 2048 , 0.013791471328992172 )};
            \addlegendentry{\labelfontsize Reddit}
        \addplot[color=red,mark=x]
            coordinates{( 32 , 0.17444218411117088 )( 64 , 0.10809646347109458 )( 128 , 0.043514730618989944 )( 256 , 0.032257962707229665 )( 512 , 0.027467001463634868 )( 1024 , 0.02142076546560424 )( 2048 , 0.016314342577477492 )};
            \addlegendentry{\labelfontsize Yelp}
        \addplot[color=cyan,mark=x]
            coordinates{( 32 , 0.20364752164966324 )( 64 , 0.1512277354496316 )( 128 , 0.08807188985630125 )( 256 , 0.07587845780960793 )( 512 , 0.06421060189651843 )( 1024 , 0.049012438038222465 )( 2048 , 0.026849472736495347 )};
            \addlegendentry{\labelfontsize Amazon}
    \end{axis}

\end{tikzpicture}
	\caption{L2 Cache Miss Rate}
    \label{fig: exp cache}
\end{figure}

Since our partitioning strategy for feature aggregation {(Section \ref{sec: para feat prop})} is based on the L2-cache size of the system, we evaluate the cache miss rate under various cache sizes by simulation. {We use CSR format to represent the sparse adjacency matrix of the subgraph and column major layout to represent the dense feature matrix $\X_s$.} We use the open-source simulator DynamoRIO \citep{dynamorio} to simulate our \verb|C++| implementation. 
We configure the system to be 40 cores with two levels of cache, {where the first level of cache corresponds to the L2-cache of the real system}. We vary the size of the first level of private cache from 32KB to 2048KB. We fix the size of the second level of shared cache to be 50MB.
We let the simulator to run one full training iteration and record the cache miss rate for the first level of private cache.
Figure \ref{fig: exp cache} shows the effect of cache size on cache miss rate. When the cache size increases from 32KB to 512KB, the cache miss rate quickly drops to below 5\%. 
The parallel execution using our partitioning strategy indeed leads to low cache miss rate. {This indicates small amount of slow-to-fast memory data traffic as a benefit of our partitioning strategy. }

\subsection{Comparison with GPU}

{We compare the proposed training algorithm with GPU implementation from Tensorflow. We run the GPU program on an Nvidia Tesla P100 GPU with 16GB of GDDR5 memory, with the same Xeon CPU server as described in Section \ref{sec: exp setup}. Table \ref{tab:GPU} shows the performance of the proposed training algorithm on CPU and the Tensorflow implementation on GPU. Both use the same parallel graph sampling algorithm as described in Section \ref{sec: para sample}. For the CPU execution, we use all the available 40 cores. For GPU program, the sampling is done on CPU with 40 cores, while the rest parts are done on GPU. We use the frontier sampling algorithm with node budget $n=8000$ and $p_\text{intra}=8$. We choose hidden dimension $f=512$ and record the average execution time per iteration for 100 iterations. The GPU program runs faster than the CPU program by 1.93$\times$, 2.71$\times$, 2.05$\times$ and 2.20$\times$ on PPI, Reddit, Yelp and Amazon dataset. Note that the peak performance of the CPUs is only 3.5 TFLOPS while the peak performance of the GPU is 10.3 TFLOPS. As stated in Section \ref{sec: para spmm}, the proposed parallel training algorithm scales up to 136 cores on CPU. On a 64- or 128-core machine, the proposed algorithm would out-perform GPU based on our modeling (Section \ref{sec: para feat prop}). 
Importantly, the fast training on GPU also indicates the effectiveness of our graph sampling based minibatch algorithm as well as our parallelization strategy on the frontier sampler. 
}

\begin{table}[!ht]
\caption{Execution Time (s) Per Iteration (Hidden Dimension = 512)}
    \centering
    \begin{tabular}{rcc}
        \toprule
        Dataset & CPU & GPU\\
        \midrule
        \midrule
        PPI & 0.1974 & 0.1021\\
        Reddit & 0.3676 & 0.1357\\
        Yelp & 0.2917 &  0.1420\\
        Amazon & 0.4416 & 0.2004\\
        \bottomrule
    \end{tabular}
    \label{tab:GPU}
\end{table}

\subsection{Evaluation on Synthetic Graphs}

Since the largest available real-world dataset for GNN training (i.e., Amazon) contains only about 1.5 million nodes, we generate synthetic graphs of much larger sizes to perform more thorough scalability evaluation. 
In the left plot of Figure \ref{fig: exp synthetic}, the sizes of the synthetic graphs grow from 1 million nodes to around 33 million nodes. All synthetic graphs have average degree of 16. We run a 2-layer GNN with hidden dimension of 512 on the subgraphs of the synthetic graphs. The vertical axis denotes the time to compute one iteration (i.e., the time to perform forward and backward propagation on one minibatch subgraph). 
{The subgraphs are all sampled by the frontier sampling algorithm with the same sampling parameters of $n=8000$ and $m=1000$.}
With the increase of the training graph size, the iteration time converges to a constant value of around 100 ms. This indicates that our parallel training is highly scalable with respect to the graph size. {When increasing the number of graph nodes, we keep the average degree unchanged. Therefore, the degree of the sampled subgraphs also keeps unchanged (due to the property of frontier sampling). Since we set the node budget $n$ to be fixed, the subgraph size (in terms of number of nodes and edges) in each iteration is approximately independent of the total number of nodes in the training graph. 
So the cost to perform one step of gradient update does not depend on the training graph size (for a given training graph degree).
}

In the right plot of Figure \ref{fig: exp synthetic}, we fix the graph size as $\size{\V}=2^{20}$ and increase the average degree. 
Under the same sampling algorithm, if the original graph becomes denser, the sampled subgraphs are more likely to be denser as well. 
The computation complexity of feature aggregation is proportional to the subgraph degree. We observe that the iteration time approximately grows linearly with the average degree of the original training graph. This indicates that our parallel training algorithm can handle both sparse and dense graphs very well.

\subsection{Deeper Learning}
\label{sec: exp deep}

Although state-of-the-art training methods \citep{graphsage,fastgcn,asgcn,sgcn} are not evaluated on GNN models deeper than 3 layers, adding more layers in a neural network is proven to be very effective in increasing the expressive power (and thus accuracy) of the network \citep{nn_depth}. 
Here we evaluate the efficiency and overall training speedup of our GNN implementation compared with \citep{graphsage}, under various number of layers using 40 processors. 
The evaluation is based on our \verb|C++| implementation. 

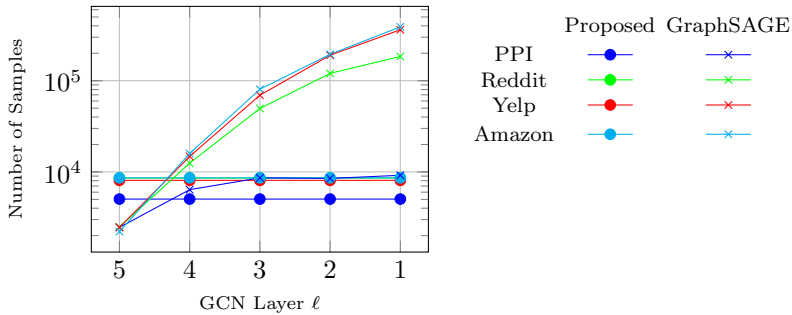
\begin{figure}
	\centering
	\begin{tikzpicture}

    \pgfplotsset{compat=newest,scaled x ticks=false,scaled y ticks=false}

    \def\widthplot{0.5\textwidth}
    \def\titlefontsize{\footnotesize}
    \def\axisfontsize{\scriptsize}
    \def\labelfontsize{\footnotesize}

    \begin{axis}[ylabel={\axisfontsize Number of Samples},xlabel={\axisfontsize GCN Layer $\ell$},grid=major,height=0.8*\widthplot,width=\widthplot,legend style={font=\labelfontsize},yticklabel style={/pgf/number format/fixed},ymode=log,xtick={1,2,3,4,5},xticklabels={5,4,3,2,1}]
        \addplot[color=blue,mark=*]
            coordinates{(1,5037)(2,5037)(3,5037)(4,5037)(5,5037)};
            \label{gsaintppi}
        \addplot[color=green,mark=*]
            coordinates{(1,8565)(2,8565)(3,8565)(4,8565)(5,8565)};
            \label{gsaintreddit}
        \addplot[color=red,mark=*]
            coordinates{(1,8082)(2,8082)(3,8082)(4,8082)(5,8082)};
            \label{gsaintyelp}
        \addplot[color=cyan,mark=*]
            coordinates{(1,8606)(2,8606)(3,8606)(4,8606)(5,8606)};
            \label{gsaintamazon}
        \addplot[color=blue,mark=x]
            coordinates{(1,2466)(2,6385)(3,8604)(4,8490)(5,9180)};
            \label{gsageppi}
        \addplot[color=green,mark=x]
            coordinates{(1,2485)(2,12475)(3,49632)(4,120315)(5,184523)};
            \label{gsagereddit}
        \addplot[color=red,mark=x]
            coordinates{(1,2465)(2,14787)(3,69322)(4,190336)(5,362250)};
            \label{gsageyelp}
        \addplot[color=cyan,mark=x]
            coordinates{(1,2217)(2,15923)(3,80661)(4,194837)(5,389734)};
            \label{gsageamazon}
        \coordinate (bot) at (rel axis cs:1,1);
    \end{axis}

    \matrix[matrix of nodes,anchor=north west,inner sep=0.2em,draw=none,font=\labelfontsize,column 5/.style={anchor=base west}] at ([xshift=3ex,yshift=0]bot)
    {
        & Proposed & GraphSAGE \\
       PPI & \ref{gsaintppi} & \ref{gsageppi} \\
       Reddit & \ref{gsaintreddit} & \ref{gsagereddit} \\
       Yelp & \ref{gsaintyelp} & \ref{gsageyelp} \\
       Amazon & \ref{gsaintamazon} & \ref{gsageamazon} \\
    };

\end{tikzpicture}
	\caption{Comparison on the number of sampled nodes per GNN layer}
    \label{fig: nodeslayer}
\end{figure}

We first evaluate the computation efficiency. As discussed in Section \ref{sec: gcn efficiency}, layer sampling based training methods such as \citep{graphsage} suffer from ``neighbor explosion''. Therefore, on deep models, there may be significant amount of redundant computation across training iterations. Recall that we analyze the per epoch computation complexity in Section \ref{sec: gcn efficiency}, under the two cases of large and small batch sizes respectively. 
Figure \ref{fig: nodeslayer} shows the severity of ``neighbor explosion'' by visualizing the number of sampled nodes per GNN layer for the two training methods. Denote $L$ as number of graph convolution layers. 
The minibatch sampling of \citep{graphsage} proceeds as follows. \citep{graphsage} first randomly pick the $r$ number of root nodes from the output of the last graph convolution layer (i.e., layer-$L$). Then, to generate the layer $\ell-1$ samples, it randomly pick $s^{\paren{\ell}}$ neighbors of each layer $\ell$ sampled nodes. \citep{graphsage} completes the minibatch construction when it has finished picking the input nodes of layer $1$. 
Following the recommended setting of \citep{graphsage}, we set $r=512$, $s^{\paren{L}}=25$ and $s^{\paren{\ell}}=10$ for $1\leq \ell\leq L-1$. 
Regarding our proposed training algorithm, since the sampling is performed on the training graph rather than the GNN, all layers have the same $\size{\V[s]}$ number of nodes. 
Figure \ref{fig: nodeslayer} shows the number of unique sampled nodes per layer for the two training methods. 
When the GNN model is deep, \citep{graphsage} requires orders of magnitude more samples than our training method. 
In addition, the number of sampled nodes of \citep{graphsage} eventually converges to the full graph size $\size{\V}$ when the GNN depth is high. In summary, Figure \ref{fig: nodeslayer} empirically verifies the complexity analysis in Section \ref{sec: gcn efficiency} and shows the advantage in high training efficiency of our method.

\begin{figure}
	\centering
	\begin{tikzpicture}

    \pgfplotsset{compat=newest,scaled x ticks=false,scaled y ticks=false}

    \def\widthplot{0.5\textwidth}
    \def\titlefontsize{\footnotesize}
    \def\axisfontsize{\scriptsize}
    \def\labelfontsize{\footnotesize}

    \begin{axis}[ylabel={\axisfontsize Normalized Training Time},xlabel={\axisfontsize GCN Depth},grid=major,height=0.8*\widthplot,width=\widthplot,legend style={font=\labelfontsize},yticklabel style={/pgf/number format/fixed},xtick={1,2,3,4},ymax=100],
        \addplot[color=blue,mark=*]
            coordinates{(1,1)(2,3.77)(3,6.37)(4,8.84)};
            \label{gsaintppi}
        \addplot[color=green,mark=*]
            coordinates{(1,1)(2,2.12)(3,3.24)(4,4.27)};
            \label{gsaintreddit}
        \addplot[color=red,mark=*]
            coordinates{(1,1)(2,2.77)(3,3.47)(4,4.95)};
            \label{gsaintyelp}
        \addplot[color=cyan,mark=*]
            coordinates{(1,1)(2,3.52)(3,5.83)(4,8.27)};
            \label{gsaintamazon}
        \addplot[color=blue,mark=x]
            coordinates{(1,1)(2,3.72)(3,33.31)(4,417.65)};
            \label{gsageppi}
        \addplot[color=green,mark=x]
            coordinates{(1,1)(2,27)(3,248.31)};
            \label{gsagereddit}
        \addplot[color=red,mark=x]
            coordinates{(1,1)(2,8.71)};
            \label{gsageyelp}
        \addplot[color=cyan,mark=x]
            coordinates{(1,1)(2,6.39)};
            \label{gsageamazon}
        \coordinate (bot) at (rel axis cs:1,1);
    \end{axis}

    \matrix[matrix of nodes,anchor=north west,inner sep=0.2em,draw=none,font=\labelfontsize,column 5/.style={anchor=base west}] at ([xshift=3ex,yshift=0]bot)
    {
        & Proposed & GraphSAGE \\
       PPI & \ref{gsaintppi} & \ref{gsageppi} \\
       Reddit & \ref{gsaintreddit} & \ref{gsagereddit} \\
       Yelp & \ref{gsaintyelp} & \ref{gsageyelp} \\
       Amazon & \ref{gsaintamazon} & \ref{gsageamazon} \\
    };

\end{tikzpicture}
	\caption{Comparison of training time on deep GNN models}
    \label{fig: exp deeper time}
\end{figure}
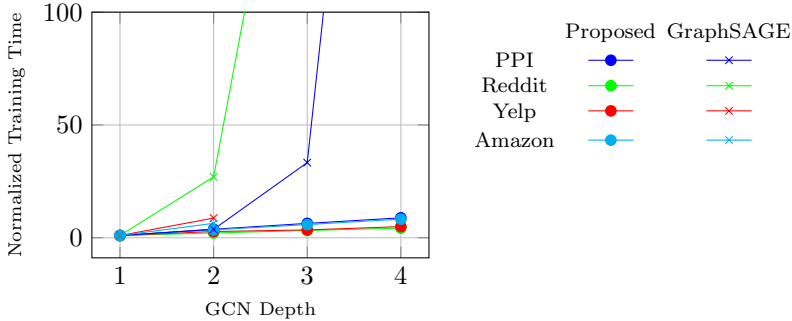

We further compare the overall training time for deep GNN models. As shown in Figure \ref{fig: exp deeper time}, we increase the GNN depth from $L=1$ to $L=4$, and set the sampling parameters as described in the above paragraph. 
Execution of both training methods uses all the 40 processing cores. 
We do not consider the difference in convergence rate and thus only measures the per-iteration execution time. 
We normalize the training time by setting the 1-layer GNN execution time as 1. 
When $L\geq 3$, 
the implementation of \citep{graphsage} results in prohibitively high training cost on PPI and Reddit, and throws runtime error on Yelp and Amazon. 
On the other hand, the training time of our method scales almost linearly with respect to the model depth. We conclude that our minibatch training algorithm, together with the parallelization and scheduling techniques, significantly facilitate the development and deployment of deeper GNN models.

\section{Discussion}

This work proposed co-design of the GNN minibatch training algorithm and the corresponding parallelization strategy. 
We next discuss several potential extensions to our parallel training algorithm. 

\paragraph{Hardware acceleration}
Our minibatch training algorithm can be used to facilitate hardware accelerator design as well. 
Apart from higher computation efficiency, another benefit of constructing minibatches by subgraphs is the reduction in communication cost. 
Suppose we use a resource-constrained hardware accelerator (e.g., FPGA) to speedup GNN training. We can sample small subgraphs so that the features of the subgraph nodes fit in the on-chip memory (whose typical size is tens of mega bits). Each iteration, once the input node features of the subgraph is transferred on-chip, the FPGA can perform the full forward and backward propagation without any communication to the external DDR memory. Therefore, we potentially achieve close-to-peak computation performance on the FPGA. 
The work in \citep{graphact} has developed a high-performance accelerator on the CPU-FPGA heterogeneous platform using our graph sampling based training algorithm. 
They quantify the feasibility of implementing the various training algorithms \citep{graphsage,fastgcn,asgcn,sgcn} on hardware by a metric called computation-communication ratio $\gamma$, where higher value of $\gamma$ indicates lower overhead in external memory communication. They further show that our algorithm achieves significantly higher $\gamma$ than the other methods \citep{fastgcn,asgcn,graphsage,sgcn}. 

\paragraph{Distributed processing}
The graph sampling based minibatch training is suitable to be executed in the distributed environment. 
After partitioning the training graph in distributed memory, each processing node can perform graph sampling independently on the local partition. Afterwards, forward and backward propagation can be executed without data access to the remote memory. 
In order to ensure convergence quality, shuffling of the node and edge data is required during the training. The optimal shuffling probability may then be derived given the graph sampling algorithm and the connectivity among the processing nodes. It is worth noticing that on each processing node, we can still locally speedup the forward and backward layer computation by designing hardware accelerators or using the parallelization strategy shown in this paper. 




\section{Conclusion and Future Work}

We presented an accurate, efficient and scalable GNN training method. Considering the redundant computation incurred in state-of-the-art GNN training, we proposed a graph sampling-based minibatch algorithm which ensures accuracy and efficiency by resolving the ``neighbor explosion'' challenge. 
We further proposed parallelization techniques and a runtime scheduler to scale the graph sampling and overall training to large number of processors. 

We will extend our graph sampling based training by integrating other graph sampling algorithms and evaluating their impact on learning accuracy. We will also work on the theoretical foundation to understand the convergence property of the graph sampling based minibatch training. 
\section*{Acknowledgement}
This work is supported by the U.S. National Science Foundation (NSF) under grants OAC-1911229 and CCF-1919289. 
\bibliographystyle{elsarticle-num}
\bibliography{citation}

\begin{thebibliography}{10}
\expandafter\ifx\csname url\endcsname\relax
  \def\url#1{\texttt{#1}}\fi
\expandafter\ifx\csname urlprefix\endcsname\relax\def\urlprefix{URL }\fi
\expandafter\ifx\csname href\endcsname\relax
  \def\href#1#2{#2} \def\path#1{#1}\fi

\bibitem{pinsage}
R.~Ying, R.~He, K.~Chen, P.~Eksombatchai, W.~L. Hamilton, J.~Leskovec, Graph
  convolutional neural networks for web-scale recommender systems, in:
  Proceedings of the 24th ACM SIGKDD International Conference on Knowledge
  Discovery \& Data Mining, 2018, pp. 974--983.

\bibitem{gcn_traffic}
B.~Yu, H.~Yin, Z.~Zhu,
  \href{http://dx.doi.org/10.24963/ijcai.2018/505}{Spatio-temporal graph
  convolutional networks: A deep learning framework for traffic forecasting},
  Proceedings of the Twenty-Seventh International Joint Conference on
  Artificial Intelligence (Jul 2018).
\newblock \href {https://doi.org/10.24963/ijcai.2018/505}
  {\path{doi:10.24963/ijcai.2018/505}}.
\newline\urlprefix\url{http://dx.doi.org/10.24963/ijcai.2018/505}

\bibitem{gcn_cv}
Z.-M. Chen, X.-S. Wei, P.~Wang, Y.~Guo, Multi-label image recognition with
  graph convolutional networks (2019).
\newblock \href {http://arxiv.org/abs/1904.03582} {\path{arXiv:1904.03582}}.

\bibitem{graphsage}
W.~Hamilton, Z.~Ying, J.~Leskovec, Inductive representation learning on large
  graphs, in: Advances in Neural Information Processing Systems (NIPS), 2017,
  pp. 1024--1034.

\bibitem{gcn}
T.~Kipf, M.~Welling, Semi-supervised classification with graph convolutional
  networks, in: International Conference on Learning Representations (ICLR),
  2016.

\bibitem{fastgcn}
J.~Chen, T.~Ma, C.~Xiao, {FastGCN}: Fast learning with graph convolutional
  networks via importance sampling, in: International Conference on Learning
  Representations (ICLR), 2018.

\bibitem{gat}
P.~Velickovic, G.~Cucurull, A.~Casanova, A.~Romero, P.~Lio, Y.~Bengio, Graph
  attention networks (2017).
\newblock \href {http://arxiv.org/abs/1710.10903} {\path{arXiv:1710.10903}}.

\bibitem{graphsaint}
H.~Zeng, H.~Zhou, A.~Srivastava, R.~Kannan, V.~Prasanna,
  \href{https://openreview.net/forum?id=BJe8pkHFwS}{{GraphSAINT}: Graph
  sampling based inductive learning method}, in: International Conference on
  Learning Representations, 2020.
\newline\urlprefix\url{https://openreview.net/forum?id=BJe8pkHFwS}

\bibitem{sgcn}
J.~Chen, J.~Zhu, L.~Song, Stochastic training of graph convolutional networks
  with variance reduction, arXiv preprint arXiv:1710.10568 (2017).

\bibitem{mixhop}
S.~Abu{-}El{-}Haija, B.~Perozzi, A.~Kapoor, H.~Harutyunyan, N.~Alipourfard,
  K.~Lerman, G.~V. Steeg, A.~Galstyan,
  \href{http://arxiv.org/abs/1905.00067}{Mixhop: Higher-order graph
  convolutional architectures via sparsified neighborhood mixing}, CoRR
  abs/1905.00067 (2019).
\newblock \href {http://arxiv.org/abs/1905.00067} {\path{arXiv:1905.00067}}.
\newline\urlprefix\url{http://arxiv.org/abs/1905.00067}

\bibitem{asgcn}
W.~Huang, T.~Zhang, Y.~Rong, J.~Huang, Adaptive sampling towards fast graph
  representation learning, in: Advances in neural information processing
  systems, 2018, pp. 4558--4567.

\bibitem{ipdps}
H.~{Zeng}, H.~{Zhou}, A.~{Srivastava}, R.~{Kannan}, V.~{Prasanna}, Accurate,
  efficient and scalable graph embedding, in: 2019 IEEE International Parallel
  and Distributed Processing Symposium (IPDPS), 2019, pp. 462--471.
\newblock \href {https://doi.org/10.1109/IPDPS.2019.00056}
  {\path{doi:10.1109/IPDPS.2019.00056}}.

\bibitem{large_batch_size}
N.~S. Keskar, D.~Mudigere, J.~Nocedal, M.~Smelyanskiy, P.~T.~P. Tang, On
  large-batch training for deep learning: Generalization gap and sharp minima,
  CoRR abs/1609.04836 (2016).
\newblock \href {http://arxiv.org/abs/1609.04836} {\path{arXiv:1609.04836}}.

\bibitem{sampling_survey}
P.~Hu, W.~C. Lau, A survey and taxonomy of graph sampling, arXiv preprint
  arXiv:1308.5865 (2013).

\bibitem{forest_fire}
J.~Leskovec, C.~Faloutsos, Sampling from large graphs, in: Proceedings of the
  12th ACM SIGKDD international conference on Knowledge discovery and data
  mining, 2006, pp. 631--636.

\bibitem{graph_over_time}
J.~Leskovec, J.~Kleinberg, C.~Faloutsos, Graphs over time: densification laws,
  shrinking diameters and possible explanations, in: Proceedings of the
  eleventh ACM SIGKDD international conference on Knowledge discovery in data
  mining, 2005, pp. 177--187.

\bibitem{frontier}
B.~Ribeiro, D.~Towsley, Estimating and sampling graphs with multidimensional
  random walks, in: Proceedings of the 10th ACM SIGCOMM Conference on Internet
  Measurement, IMC '10.
\newblock \href {https://doi.org/10.1145/1879141.1879192}
  {\path{doi:10.1145/1879141.1879192}}.

\bibitem{knightking}
K.~Yang, M.~Zhang, K.~Chen, X.~Ma, Y.~Bai, Y.~Jiang, Knightking: a fast
  distributed graph random walk engine, in: Proceedings of the 27th ACM
  Symposium on Operating Systems Principles, 2019, pp. 524--537.

\bibitem{alias}
A.~J. Walker, An efficient method for generating discrete random variables with
  general distributions, ACM Transactions on Mathematical Software (TOMS) 3~(3)
  (1977) 253--256.

\bibitem{MKL}
Intel mkl, \url{https://software.intel.com/en-us/mkl}, accessed: 2018-10-12.

\bibitem{prop_blocking}
S.~Beamer, K.~Asanovic, D.~Patterson, Reducing pagerank communication via
  propagation blocking, in: 2017 IEEE International Parallel and Distributed
  Processing Symposium (IPDPS), 2017.
\newblock \href {https://doi.org/10.1109/IPDPS.2017.112}
  {\path{doi:10.1109/IPDPS.2017.112}}.

\bibitem{xstream}
A.~Roy, I.~Mihailovic, W.~Zwaenepoel, X-stream: Edge-centric graph processing
  using streaming partitions, in: Proceedings of the Twenty-Fourth ACM
  Symposium on Operating Systems Principles.

\bibitem{gpop}
K.~Lakhotia, R.~Kannan, V.~Prasanna, Accelerating pagerank using
  partition-centric processing, in: 2018 {USENIX} Annual Technical Conference
  ({USENIX} {ATC} 18), {USENIX} Association, Boston, MA.

\bibitem{hidden_dim}
M.~Zhang, Y.~Wu, K.~Chen, X.~Qian, X.~Li, W.~Zheng, Exploring the hidden
  dimension in graph processing, in: 12th {USENIX} Symposium on Operating
  Systems Design and Implementation ({OSDI} 16).

\bibitem{mcm}
Matrix chain multiplication,
  \url{http://faculty.cs.tamu.edu/klappi/csce629-f17/csce411-set6c.pdf},
  accessed: 2020-07-10.

\bibitem{ppi_reddit}
Snap datasets, \url{http://snap.stanford.edu/graphsage/\#datasets}.

\bibitem{yelp}
Yelp 2018 challenge, \url{https://www.yelp.com/dataset}.

\bibitem{word2vec}
T.~Mikolov, I.~Sutskever, K.~Chen, G.~S. Corrado, J.~Dean, Distributed
  representations of words and phrases and their compositionality, in: Advances
  in neural information processing systems, 2013.

\bibitem{kronecker}
J.~Leskovec, D.~Chakrabarti, J.~Kleinberg, C.~Faloutsos, Z.~Ghahramani,
  Kronecker graphs: An approach to modeling networks, Journal of Machine
  Learning Research 11~(Feb) (2010) 985--1042.

\bibitem{dynamorio}
D.~Bruening, Dynamorio: Dynamic instrumentation tool platform.

\bibitem{nn_depth}
M.~Telgarsky, Benefits of depth in neural networks, in: V.~Feldman, A.~Rakhlin,
  O.~Shamir (Eds.), 29th Annual Conference on Learning Theory, Proceedings of
  Machine Learning Research, PMLR.

\bibitem{graphact}
H.~Zeng, V.~Prasanna,
  \href{https://doi.org/10.1145/3373087.3375312}{{GraphACT}: Accelerating {GCN}
  training on {CPU-FPGA} heterogeneous platforms}, in: The 2020 ACM/SIGDA
  International Symposium on Field-Programmable Gate Arrays, FPGA '20,
  Association for Computing Machinery, New York, NY, USA, 2020, pp. 255--265.
\newblock \href {https://doi.org/10.1145/3373087.3375312}
  {\path{doi:10.1145/3373087.3375312}}.
\newline\urlprefix\url{https://doi.org/10.1145/3373087.3375312}

\end{thebibliography}

\end{document}